\theoremstyle{plain}
\newtheorem{theorem}{Theorem}[section]
\newtheorem{proposition}[theorem]{Proposition}
\newtheorem{lemma}[theorem]{Lemma}
\theoremstyle{definition}
\newtheorem{definition}[theorem]{Definition}
\newcommand{\half}{{\nicefrac{1}{2}}}
\newcommand{\C}{C(\pi;y)\,}
\newcommand{\Cp}{C(1-p;y)\,}
\NewDocumentCommand{\W}{O{\half} O{y}}{W(\pi_0 \to #1;#2)}
\newcommand{\E}{\operatornamewithlimits{\mathbb{E}}}
\newcommand{\ind}[1]{{\color[rgb]{0.0,0.0,0.0}\mathbf{1}_{\left(#1\right)}}}
\NewDocumentCommand{\yhateqy}{ o o }{
  {\color[rgb]{0.0,0.0,0.0}\mathbf{1}_{\bigl(y = \kp[#1][#2]\bigr)}}%
}
\NewDocumentCommand{\kp}{ o o }{
  {\color[rgb]{0.0,0.0,0.0}\kappa(%
    \IfNoValueTF{#1}{\s(x)}{
      \IfBlankTF{#1}{\s(x)}{
      #1\otimes\shalf(x)
    }},%
    \IfNoValueTF{#2}{\tau}{
      \IfBlankTF{#2}{\tau}{
      #2
    }}%
  )}%
}
\newcommand{\clip}[2]{\underset{[#1,#2]}{\operatorname{clip}}}
\newcommand{\logit}[1]{\sigma^{\scalebox{0.5}{$-1$}\hspace{.4em}}\negthickspace\left(#1\right)}
\definecolor{gfm}{rgb}{0.8,0.4,0.0}
\definecolor{acw}{rgb}{0.0,0.4,0.8}
\newcommand{\s}{s}
\newcommand{\shalf}{s_\half}
\newcommand{\D}{\boldsymbol{\mathcal{D}}}
\newcommand{\Dsrc}{\boldsymbol{\mathcal{D}_{\pi_0}}}
\newcommand{\Dhalf}{\boldsymbol{\mathcal{D}_{\half}}}
\newcommand{\Dtgt}{\boldsymbol{\mathcal{D}_\pi}}
\newcommand{\metric}[1]{\text{#1}}
\theoremstyle{plain}
\theoremstyle{definition}
\theoremstyle{remark}
\crefname{paragraph}{part}{parts}
\definecolor{mypink3}{rgb}{0.758, 0.088, 0.478}
\definecolor{mypink2}{RGB}{193, 48, 122}
\definecolor{mypink}{cmyk}{0, 0.7808, 0.4429, 0.1412}
\definecolor{myblue2}{rgb}{0.178, 0.188, 0.458}
\definecolor{myblue1}{HTML}{00F9DE}
\definecolor{myblue3}{rgb}{0.478, 0.488, 0.958}
\definecolor{EXAMPLE}{rgb}{0.0,0.0,0.0}
\definecolor{CAPTION}{rgb}{0.0,0.0,0.0}
\title{Aligning Evaluation with Clinical Priorities: Calibration, Label Shift, and Error Costs}
\author{
    Gerardo A. Flores\textsuperscript{1},
    Alyssa H. Smith\textsuperscript{2},
    Julia A. Fukuyama\textsuperscript{3},
    Ashia C. Wilson\textsuperscript{1}
}
\date{
\small
    \textsuperscript{1}Massachusetts Institute of Technology \\
    \textsuperscript{2}Northeastern University \\
    \textsuperscript{3}Indiana University
}
\begin{document}

\maketitle

\begin{abstract}
Machine learning-based decision support systems are increasingly deployed in clinical settings, where probabilistic scoring functions are used to inform and prioritize patient management decisions.
However, widely used scoring rules, such as accuracy and AUC-ROC, fail to adequately reflect key clinical priorities, including calibration, robustness to distributional shifts, and sensitivity to asymmetric error costs.
In this work, we propose a principled yet practical evaluation framework for selecting calibrated thresholded classifiers that explicitly accounts for the uncertainty in class prevalences and domain-specific cost asymmetries often found in clinical settings.
Building on the theory of proper scoring rules, particularly the Schervish representation, we derive an adjusted variant of cross-entropy (log score)  that averages cost-weighted performance over clinically relevant ranges of class balance.
The resulting evaluation is simple to apply, sensitive to clinical deployment conditions, and designed to prioritize models that are both calibrated and robust to real-world variations.
\end{abstract}

\section{Introduction}
The field of medicine increasingly relies on machine learning tools for clinical decision support. 
In diagnosis and prognosis, probabilistic scores capture uncertainty about patient outcomes. Combined with value judgments, they produce expected values that guide clinical decisions.
It is less often emphasized that expected value calculations can be used to measure the miscalibration of the probabilistic forecast itself.
We accordingly propose three principles that scoring functions used for clinical purposes should satisfy as closely as possible.
First, scoring functions should be adapted to account for the {\em known} {\em label shifts} that commonly arise between development and deployment environments.
In particular, many medical scoring rules are intentionally trained on more balanced class distributions than those encountered in deployment.
Second, the scores returned by scoring functions should be sensitive to the relative {\em cost of errors} that are clinically significant, such as the trade-off between the cost of misdiagnosis and the cost of failing to diagnose in any given setting. This supports patient-centered care by enabling the classifier's sensitivity to be calibrated to human feedback rather than presuming a fixed normative standard.
Third, scores should be {\em calibrated}; using them as probabilities gives practitioners easy access to decision theory as a way to consistently and reliably adapt decisions about risk and outcomes, when their clinical situation changes from the model developer's assumptions.

This work focuses on {\em evaluation}: specifically, we examine how the field of medical machine learning assesses and compares scoring functions and the extent to which current evaluation practices reflect clinical priorities.
We begin by showing that neither of the most commonly used metrics, accuracy and AUC-ROC, adequately captures all three priorities outlined above.
Each abstracts away some considerations that are critical for clinical decision-making.

We structure the paper as follows.
We first examine {\em accuracy} and its variant, balanced accuracy, as these remain the most widely used scoring rules for classification tasks.
Accuracy evaluates each decision independently and measures the overall proportion of correct predictions, abstracting away critical application-specific considerations such as {\em class imbalance} and {\em asymmetric error costs}.
While this abstraction offers a form of neutrality, it obscures important aspects of clinical deployment, where decision thresholds must often be adapted to reflect evolving prevalence rates or varying tolerances for false positives and false negatives.
As noted by several works~\cite{bradley97,provost97,costcurve06}, accuracy fixes a single operating point and, as such, fails to engage with this necessary flexibility.
In particular, it is generally not meaningful to directly compare accuracy on samples with different prevalences.

We then turn our attention to the Area Under the Receiver Operating Characteristic Curve (AUC-ROC), which is commonly viewed as a solution to the rigidity of accuracy because it evaluates classifier performance across all possible thresholds.
However, AUC-ROC measures the expected performance of the {\em ideally calibrated} version of a scoring function, not the actual, potentially {\em miscalibrated} outputs of a model.
Moreover, it ties evaluation to a distribution over positive prediction rates that may not correspond to clinical contexts.
These assumptions often lead AUC-ROC to overstate the real-world reliability of scoring functions, especially when calibration is imperfect or deployment conditions differ from development data.  The difference in units makes it hard to compare or reason together about AUC-ROC problems and miscalibration problems when both are present.

Recent work in the fairness literature has explored calibration more directly \citep{impossible17,marginalandtotalcost17,multicalibration18} but without broad consensus on best practices for how calibration interacts with varying cost structures.
In particular, the definition of perfect calibration is widely agreed upon, but the correct way to measure degrees of miscalibration, taking into account label prevalence and asymmetric error costs, is not.
As a consequence, the use of calibration-based metrics has lagged behind that of accuracy and AUC-ROC in clinical ML settings.
To address these concerns with current evaluation practices, we propose adapting a framework from the weather forecasting and belief elicitation literature known as the Schervish representation~\cite{schervish89}.
This framework shows that any proper scoring rule (a measure of calibration that doesn't require binning) can be represented as an integral over discrete cost-weighted losses, directly linking calibration to decision-theoretic performance.
We extend this framework to the setting of label shift and asymmetric costs, and average cost-sensitive metrics over a bounded range of class balances.

In summary, this work makes three main contributions.
First, we introduce a framing of scoring rule design that centers clinical priorities, namely calibration, robustness to distributional shift, and sensitivity to error costs.
Second, we use the Schervish representation to show how these priorities induce loss functions for probabilistic forecasts.
Third, we propose an adaptable scoring framework based on adjusted log scores that reflects clinical needs.
It accommodates uncertainty in class balance, asymmetric cost structures, and the requirement for calibrated predictions, thereby offering a more principled foundation for evaluating machine learning models in clinical decision support.

\subsection{Problem Formulation}
Given an input space \(\mathcal{X}\) and binary label space \(\{0,1\}\), the standard goal of binary classification is to learn a decision rule that maps each input \(x \in \mathcal{X}\) to a predicted label.
A scoring function $\s: \mathcal{X} \to \mathbb{R}$ assigns a real-valued score to each input, and a binary classifier is defined by thresholding this score.
For a threshold parameter \(\tau \in \mathbb{R}\), the predicted label is
\(
\kp = \ind{\s(x) \geq \tau}
\)
where \(\ind{\cdot}\) denotes the indicator function, equal to \(1\) if the argument is true and \(0\) otherwise.
We denote the dataset by \(\Dsrc\), consisting of input-label pairs \((x,y)\) drawn from an unknown distribution.
We define the {\em empirical class prevalence} as
\(
\pi_0 = \mathbb{P}_{\Dsrc}(y=1),
\)
which represents the proportion of positive examples in the dataset and the possibly unknown {\em target} or deployment class prevalence as \(\pi = \mathbb{P}_{\Dtgt}(y=1)\).
To formalize evaluation objectives, we introduce three additional elements: (1) A value function where \(V(y, \kp)\) specifies the utility or loss associated with predicting \(\hat{y}\) when the true label is \(y\); (2) a parameter \(c \in (0,1)\), which encodes the relative cost of false positives and false negatives and determines the threshold; and (3) a distribution \(H\) over possible data-generating distributions \(\Dtgt\), modeling uncertainty over the environment and potential distribution shifts.
We denote odds multiplication by \( a \otimes b \triangleq \frac{ab}{ab + (1-a)(1-b)}\).

\section{Related Work}
\label{apdx:relatedwork}
Recent literature emphasizes that scoring rules in AI should meaningfully reflect the objectives of deployment contexts rather than relying on standard metrics that can lead to suboptimal or misleading outcomes \cite{spiritai20, consort20}.
\vspace{-1em}
\paragraph{Decision Theory}
Decision theory has roots in gambling and actuarial sciences but was formally structured by foundational works such as \citet{ramsey26} and \citet{finetti37,finetti92}.
Within medical decision-making, a prominent recent line of inquiry has been Decision Curve Analysis (DCA), a decision theoretic framework developed by Vickers, et al. \citep{vickers06, vickers19dca}.
However, DCA has avoided measuring the area under the decision curve \citep{vickers08}, eschewing mathematical evaluation when neither classifier dominates.
This body of work critically examines widely-used metrics such as the Area Under the Curve (AUC) \citep{vickers21,vickers22} and the Brier score \citep{vickers17}, questioning their clinical utility and advocating for metrics directly connected to decision-analytic value.
\vspace{-1em}
\paragraph{Proper Scoring Rules}
The literature on proper scoring rules began with \citet{brier50}, and was subsequently enriched by contributions from \citet{good52} and \citet{mccarthy56}.
A critical advancement was the integral representation of proper scoring rules by \citet{shuford66}, which explicitly connects scoring rules with decision-theoretic utility via \citet{savage71}.
This was followed by the comprehensive characterization provided by \citet{schervish89}, who demonstrated that strictly proper scoring rules can be represented as mixtures of cost-weighted errors.
The formalism was further elucidated by \citet{shen05} through the lens of Bregman divergences and by \citet{proper07} as convex combinations of cost-sensitive metrics.
\citet{hand09} advocated adapting scoring rules explicitly to application contexts using beta distributions, a perspective later extended by \citet{hand14,recentbeta24} through asymmetric beta distributions.
Our approach differs in that we rely on uniform intervals between upper and lower bounds.
Compared to correctly setting the sum of Beta parameters, this is a more intuitive way of measuring dispersion.
\vspace{-1em}
\paragraph{Calibration Techniques}
The Pool Adjacent Violators Algorithm (PAVA), introduced by \citet{pav55}, remains a foundational calibration technique, equivalent to computing the convex hull of the ROC curve \citep{pavroc07}.
A distinct parametric calibration approach based on logistic regression was popularized by \citet{platt99}, subsequently refined for slope-only calibration by \citet{guo17calibration}.
An intercept-only version aligns closely with simple score adjustments \citep{skewcorrection02}, while broader generalizations are explored in \citet{betacalibration17}.
More recently, the calibration literature has shifted towards semi-supervised contexts, utilizing unlabeled data to enhance calibration quality \citep{bbse18,rlls19,lipton19}.
Despite extensive critiques that, for example, highlight that the widely-adopted Expected Calibration Error (ECE) \citep{ece15} is not a proper scoring rule \citep{vaicenavicius19, widmann19}, this metric remains popular in practice.
Calibration has recently emerged as a fairness metric alongside predictive accuracy.
This perspective, however, has become contentious since calibration was shown to be fundamentally incompatible with other fairness criteria \citep{impossible17}, spurring the development of "multicalibration" approaches that ensure calibration across numerous demographic subgroups \citep{multicalibration18}.
\vspace{-1em}
\paragraph{Label Shift}
Label shift techniques are a particularly useful hyponym of calibration techniques.
While the concept of shifting class prevalences without altering the underlying conditional distribution of features is longstanding \cite{meehl55,heckman74,heckman79,skewcorrection02}, formal treatments and systematic causal characterizations arose from \citet{labelshift12}.
Earlier explorations of covariate shift \citep{sugiyama08quinonero} motivated a broader field of research aimed at developing invariant representations robust to distribution shifts.
These efforts encompass methods based on richer causal assumptions \citep{saria18}, invariant representation learning \citep{bendavid10, scholkopf13}, and distributionally robust optimization \citep{dan15,dann16,sagawa20,duchi21}.
\vspace{-1em}
\paragraph{AUC-ROC \& AUC-PR} The Receiver Operating Characteristic (ROC) curve emerged within signal detection theory \citep{birdsall53,tanner53}, later becoming central in radiology and clinical diagnostics, where the convention solidified around measuring performance via the Area Under the Curve (AUC) \citep{metz78,hanley82}.
Use of AUC to aggregate over multiple thresholds was explored by \citet{spackman89}, \citet{bradley97}, and \citet{rocaccuracy05}, with subsequent critiques noting widespread interpretability issues \citep{balancedaccuracy23}.
\citet{hand09} showed how the AUC of calibrated classifiers relates to average accuracy across thresholds, while \citet{brieraucrank12} described alternative interpretations via uniform distributions of predicted score or uniform distributions of desired positive fractions (see \Cref{apdx:13} for more details).
Recently, there has been increased scrutiny of AUC-ROC, particularly regarding its lack of calibration and poor decomposability across subgroups \citep{subgroupauc19}.
Precision and Recall metrics originated in information retrieval, with Mean Average Precision (MAP) or the Area Under the Precision-Recall Curve (AUC-PR) formalized by \citet{keen1966measures,keen1968evaluation}.
While more recent trends in information retrieval have favored metrics such as Precision@K, Recall@K, and Discounted Cumulative Gain (DCG), \citet{goadrich06} popularized AUC-PR for classifier evaluation, particularly in contexts with imbalanced data.
Despite well-documented critiques --including that AUC-PR poorly estimates MAP \citep{dontintegratepr13} and lacks clear theoretical justification \citep{zhang24}-- its use persists, particularly in medical and biomedical contexts.
\vspace{-1em}
\paragraph{Cost-Sensitive Learning} Cost-sensitive evaluation, historically formalized through Cost/Loss frameworks \citep{angstrom22,murphy66}, was independently introduced in clinical decision-making as early as \citet{netcost75}.
The modern foundation of cost-sensitive learning emerged prominently in the machine learning literature in the 1980s, notably via the seminal work on MetaCost by \citet{metacost99} and the canonical overview by \citet{elkan01}.
Extending these frameworks to multi-class settings poses challenges due to the quadratic complexity of pairwise misclassification costs.
\vspace{-1em}
\paragraph{Visualization} Visualization techniques to illustrate economic or decision-theoretic value as a function of decision thresholds date back to \citet{brier55}, with subsequent development by Murphy, et al. \citep{murphy77,murphy87}, who linked visualizations explicitly to scoring rule theory.
Later rediscoveries within machine learning were articulated by \citet{hand99}, and independently by \citet{drummond00,costcurve06}.
More recently, these visualizations were generalized to include uncalibrated models \citep{briercurve11} and formally named Murphy Diagrams by \citet{ehm16}, with further implementation guidance provided by \citet{tryptich24}.

\section{Accuracy: Calibration without Label Shift Uncertainty}
The most popular metric for evaluating binary classifiers is the simplest: accuracy.
\begin{definition}[Accuracy] 
Given a dataset \(\Dsrc\), a score function $\s$, and a threshold $\tau$,  the \emph{accuracy} is defined as
$$
\metric{Accuracy}(\Dsrc, \s, \tau) = \frac{1}{|\Dsrc|} \sum_{(x,y) \in \Dsrc} \yhateqy
$$
\end{definition}
Accuracy considers the binarized score, discarding the real-valued information necessary for assessing calibration or uncertainty.
It further assumes \(V(0,1) = V(1,0)\), treating false positives and false negatives as equally costly.
This is misaligned with most real-world decision problems where asymmetric stakes are the norm.
Finally, the validity of the evaluation results presumes that the operational data-generating distribution matches the evaluation distribution, thereby ignoring the possibility of distribution shift.  We describe existing extensions to address asymmetric costs, label shift, and calibration.

\paragraph{Asymmetric Costs.}
In most practical decision problems, false positives and false negatives carry asymmetric consequences.
Several extensions of accuracy have been proposed to account for this asymmetry.
Two commonly used variants are net benefit and weighted accuracy.
This use of the term net benefit originates from decision curve analysis (DCA)~\cite{vickers06} but is similar in structure to earlier formulations~\cite{murphy66}.
We use a variation of net benefit that focuses on the benefit of true negatives rather than the costs of false positives in order to be more directly comparable to accuracy.

\begin{definition}[Net Benefit]
Given a threshold parameter \(c \in (0,1)\), the \emph{net benefit} is defined as
$$
\metric{Net Benefit}(\Dsrc, \s, \tau, c) = \frac{1}{|\Dsrc|} \sum_{(x,y) \in \Dsrc} \left(\frac{c}{1-c}\right)^{1-y} \yhateqy
$$
\end{definition}

At deployment time, the cost ratio may not match the one used in training, and the threshold may need to be adjusted.
If a score function \(\s(x)\) is well-calibrated, we can reliably threshold it to optimize binary decisions under any cost asymmetry.
Specifically, the optimal threshold \(\tau\) satisfies
\[
P(Y = 1|\s(x) = \tau) = \frac{V(0,1) - V(0,0)}{V(0,1) - V(0,0) + V(1,1) - V(1,0)},
\]
See \Cref{apdx:asymmetric_cost} for details.
Net benefit has the advantage that the interpretation of true positives remains consistent with standard accuracy.
That is, true positives are rewarded uniformly regardless of the cost ratio, while false positives are penalized according to the cost ratio determined by \(c\).
Another popular metric is \emph{weighted accuracy}, which corresponds to what \citet{murphy66} called relative utility, and is normalized so that a perfect classifier achieves a score of \(1\) regardless of class balance.
We provide a definition in \Cref{apdx:set}.
While net benefit and weighted accuracy are both widely used, both inherit critical limitations from the basic accuracy framework: they binarize the score, thereby discarding information about uncertainty and calibration, and they assume a fixed data-generating distribution, thereby failing to account for distribution shift.

\paragraph{Label Shift.}
To model deployment scenarios, we adopt a causal perspective.
Under the label shift structure $\Dtgt \to Y \to X$, the conditional distribution $P(X \mid Y, \Dtgt) = P(X \mid Y)$ remains invariant across domains.
This assumption holds in many clinical contexts, where observed features ($X$) reflect underlying conditions ($Y$) whose prevalence varies across populations ($\Dtgt$).
We focus on this structure because it aligns with the intuition of identifying latent diagnostic classes and enables robust correction methods for distribution shift.
In contrast, under the alternative structure $\Dtgt \to X \to Y$, $Y$ often encodes time-to-event outcomes, requiring distinct modeling strategies such as survival analysis.

The $\Dtgt \to Y \to X$ structure permits importance sampling to estimate deployment-time expectations.
However, because prediction is performed via $P(Y \mid X)$, we must also adjust the posterior using Bayes’ rule to account for class prevalence changes \cite{meehl55,heckman79,skewcorrection02}. 
This yields the adjusted posterior: 
\[
P(Y = 1 | \s(x), \Dtgt) = P(Y = 1 | \s(x), \Dsrc) \otimes (1-\pi_0) \otimes \pi.
\]
 This formula for the accuracy is attained in the deployment environment using the correct score adjustment.
We define $\shalf(x) \triangleq 1-\pi_0 \otimes \s(x)$, and denote the adjusted binary classifier by $\kp[\pi]$. We refer the reader to \Cref{apdx:prior_adjustment} for full derivation.

\begin{definition}[Prior-Adjusted Maximum Accuracy] Given the empirical class prevalence \(\pi_0\) and the deployment class prevalence \(\pi\), the prior-adjusted maximum accuracy is given by,
$$
\metric{PAMA}(\Dtgt, s, \tau) = \frac{1}{|\Dsrc|} \sum_{x,y \in \Dsrc}\left( \frac{\pi}{\pi_0}\right)^y\left(\frac{1-\pi}{1-\pi_0}\right)^{1-y}\yhateqy[\pi].
$$
\end{definition}

Prior-adjusted maximum accuracy allows us to handle any single known label shift, but it requires that our original score be probabilistically meaningful (what \cite{finetti37} called \emph{coherent}).
We can further combine these adjustments with asymmetric cost modeling to design metrics appropriate for specific deployment scenarios:
\begin{definition}[Prior-Adjusted Maximum Net Benefit] Given the empirical class prevalence \(\pi_0\), the deployment class prevalence \(\pi\), and the cost ratio \(c\).
$$
\metric{PAMNB}(\Dtgt, s, \tau) = \frac{1}{|\Dsrc|} \sum_{x,y \in \Dsrc}\left( \frac{\pi}{\pi_0}\right)^y\left(\frac{c}{1-c}\frac{1-\pi}{1-\pi_0}\right)^{1-y}\yhateqy[\pi][c].
$$
\end{definition}

See Appendix~\ref{apdx:set} for details and an extension to weighted accuracy.  However, even these extensions still fundamentally rely on binarized scores; we can account for any given fixed label shift, but we cannot account for label shift uncertainty.

\paragraph{Calibration}
Beyond adapting to shifts in population prevalence, another crucial aspect of evaluating probabilistic model outputs, particularly in clinical decision-making, is \emph{calibration}.
Unlike threshold-based decision-making, which focuses on classification accuracy, the goal of calibration is to ensure that predicted probabilities match observed frequencies: that is, $P(Y=1 | \s(x)) = \s(x)$. This perspective, well-established in the weather forecasting literature \citep{brier50,murphy73}, prioritizes reporting reliable probabilities over optimizing decisions directly.
However, calibration alone does not guarantee utility.
For instance, the ``climatological forecast'' which assigns the same score to all inputs is perfectly calibrated but useless for guiding decisions.
A key issue in defining calibration is specifying {\em where} it is required.
As shown in the presence of asymmetric costs, optimal decision-making depends on correctly identifying the point where $P(Y=1 | \s(x)) = c$, but this can be achieved by a model that is calibrated only at $c$.
As long as $P(Y=1 | \s(x) > c) > c$ and $P(Y=1 | \s(x) < c) < c$, a classifier can still support optimal thresholding at $c$, even if it is miscalibrated elsewhere.
Uncertain label shift is more complex and motivates a need for a broader sense of calibration.
If we can bound the possible class balances, we will need the model to be calibrated within the whole corresponding range.
A score function that is well-calibrated only in this narrow region can, however, still support robust, cost-sensitive classification.
This suggests a more nuanced perspective: rather than enforcing global calibration, it may suffice to ensure calibration within a threshold band.
Part of the contribution of this paper is to formalize and operationalize this idea.

Perfect calibration supports optimal decisions across a range of environments and objectives, but the extent to which deviations from calibration degrade performance, particularly under shift, is far less intuitive.
Developing principled ways to measure this degradation, and to evaluate classifiers in terms of local calibration and its decision-theoretic consequences, is a central motivation for the analysis that follows.

\subsection{Schervish representation}
\citet{schervish89} showed that every proper scoring rule can be represented as a mixture over cost-weighted errors, assuming that thresholds are set optimally for the associated costs.
This representation provides one of the earliest meaningful interpretations of the units of miscalibration.
Independently, \citet{hand09} rediscovered proper scoring rules, reframed as H-measures, in the context of mixtures over cost-weighted errors.
He used this framing to show that the AUC-ROC of a calibrated classifier corresponds to a mixture of cost-weighted errors under a particular (and undesirable) distribution over cost ratios.
The idea of generalizing from cost to a cost proportion that also depends on class balance has been repeatedly independently proposed in the setting where the scores' analytic distributions are known \cite{costcurve06,brieraucrank12}.
\citet{hand23} introduced the idea of a double integral over cost and balance, but their work does not explore the semantics of the resulting joint distribution, nor does it provide guidance on how the double integral should be computed.

We build on the view that proper scoring rules can be interpreted as mixtures over a distribution \( H \) of data distributions \( \Dtgt \), where each scoring rule evaluates cost-weighted errors \( V \) over the corresponding \( \Dtgt \).
Our approach does not have the ambiguity of combined cost / balance terms in \citet{costcurve06}, nor does it require the double integration over both cost and prevalence as suggested in \citet{hand23}, which produces dilogarithm terms not widely used in practice.
Instead, we fix the cost ratio \( c \) and integrate over the variability of data distributions captured by \( H \), yielding tools that are computationally simpler and semantically interpretable.

\section{AUC-ROC: Label Shift Uncertainty without Calibration}
\label{sec:aucroc}
The most common approach to integrate over a range of operating conditions is to use the AUC-ROC in place of accuracy.
This is an ordinal metric that discards information about the magnitudes of model scores and evaluates performance solely based on the relative ordering between positive and negative examples.
\begin{definition}[AUC-ROC]
Let $\s:\mathcal{X} \rightarrow \mathbb{R}$ be a scoring function on $\Dsrc$.
Then, the AUC-ROC is given by: \begin{align*}
\metric{ AUC-ROC}(\Dsrc,\s) \hspace{-.2em}
&\triangleq \hspace{-1.5em}
\sum_{(x,y) \in \Dsrc} \hspace{-.5em}\frac{1}{|\Dsrc|} \frac{1-y}{1-\pi_0}
\sum_{(x',y') \in \Dsrc} \hspace{-.5em}\frac{1}{|\Dsrc|} \frac{y'}{\pi_0}
\Big[\ind{\s(x') > \s(x)} + \tfrac{1}{2}\,\ind{\s(x') = \s(x)}\Big]
\end{align*}
\end{definition}

At first glance, this formulation poses a challenge for decision-theoretic interpretation. Specifically, it is not a priori clear how to interpret AUC-ROC within a framework where the metric corresponds to expected utility or decision quality under a specified loss function and distributional assumption.
AUC-ROC resists this interpretation because it is invariant to monotonic transformations of the score function and, therefore, indifferent to the calibration or absolute values of the scores, which are central to threshold-based decision-making.
On the other hand, AUC-ROC does capture something that accuracy fails to: it aggregates performance across the full range of the score distribution, effectively summing a population-level statistic over levels of the score.

There are numerous ways to interpret AUC-ROC, at least a dozen of which are enumerated in \Cref{apdx:13}.
We nevertheless offer a new formulation, whose proof is in \Cref{thm:auc-roc},
that sheds particular light on its relationship to label shift, i.e.
when the marginal distribution over labels differs between training and deployment.

\begin{theorem}[AUC-ROC as Accuracy Averaged Across Label Shift]
Let $\s$ be a scoring function that is calibrated on the evaluation distribution $\Dsrc$.
Then:
\begin{align*}
\metric{ \em AUC-ROC}(\s)
&=\frac{1}{2}\mathbb{E}_{t \sim \s[\Dhalf]} [\metric{\em PAMA}(\D_{1-t}, \s, \half)]
\end{align*}
where $\Dhalf$ denotes a balanced reweighting of the dataset (i.e., class prior $\pi = 1/2$), and $\s[\Dhalf]$ denotes the distribution of model scores over this reweighted set.
\end{theorem}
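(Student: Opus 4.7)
The plan is to expand both sides into population-level integrals and match them via the calibration hypothesis. First, I would apply importance sampling to $\metric{PAMA}(\D_{1-t},\s,\half)$: the prior-shift weights $(\pi/\pi_0)^y((1-\pi)/(1-\pi_0))^{1-y}$ make the empirical sum an unbiased estimator of the accuracy under $\D_{1-t}$ of the classifier $\kappa((1-t)\otimes \shalf(X), \half)$. Using the odds-product identity $(1-t)\otimes u \geq \half \iff u \geq t$, this classifier reduces to the thresholded rule $\mathbf{1}[\shalf(X)\geq t]$. Since class-conditional distributions are invariant under label shift, the inner quantity becomes $(1-t)\bigl(1-G_1(t)\bigr) + t\,G_0(t)$, where $G_0,G_1$ denote the class-conditional CDFs of $\shalf$.

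Next, I would take the outer expectation over $t\sim \s[\Dhalf]$. By the Bayes-rule adjustment recalled earlier, $\shalf$ is calibrated on $\Dhalf$, so if $\bar g := \tfrac12(g_0+g_1)$ is the marginal score density, the calibration equation $t = g_1(t)/(g_0(t)+g_1(t))$ supplies the pointwise identities $t\,\bar g(t) = g_1(t)/2$ and $(1-t)\,\bar g(t) = g_0(t)/2$. Substituting these identities collapses the linear factors $t$ and $1-t$ in the integrand into class-conditional densities, reducing $\mathbb{E}_{t}[\metric{PAMA}(\D_{1-t},\s,\half)]$ to $\tfrac12\!\int(1-G_1)\,dG_0 + \tfrac12\!\int G_0\,dG_1$.

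Finally, I would invoke the boundary identity $[(1-G_1)G_0]_0^1 = 0$ (equivalently, integration by parts) to conclude $\int(1-G_1)\,dG_0 = \int G_0\,dG_1 = \metric{AUC-ROC}(\shalf)$, and then use the invariance of AUC-ROC under monotone reparametrization to identify this with $\metric{AUC-ROC}(\s)$. Collecting the constants yields the stated identity.

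The hard part will be handling score ties: the $\tfrac12\mathbf{1}[\s(x')=\s(x)]$ tie term in the AUC-ROC definition and the non-strict inequality defining $\kappa$ both depend on atoms of the score distribution, and the symmetric integration-by-parts step requires these atoms to split evenly between $\int(1-G_1)\,dG_0$ and $\int G_0\,dG_1$. I would address this either by first establishing the identity under a continuous-score assumption and extending via a standard discretization/limit argument, or by randomizing $\kappa$ at tie points so that each atom of mass $p$ contributes exactly $p/2$ to each of the two integrals.
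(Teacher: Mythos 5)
Your strategy mirrors the paper's: express $\metric{PAMA}(\D_{1-t},\s,\half)$ through the class-conditional score distributions, use the calibration identity on $\Dhalf$ to rewrite $t\,\bar g(t)$ and $(1-t)\,\bar g(t)$ as $g_1(t)/2$ and $g_0(t)/2$, and exploit the symmetry between $\int(1-G_1)\,dG_0$ and $\int G_0\,dG_1$ to recover the tie-averaging form of $\metric{AUC-ROC}$. The paper carries the same computation out discretely rather than in terms of densities --- its ``Average over Label Shifts of True Positive Rate'' and ``True Negative Rate'' lemmas split the weight $2\,\ind{\s(x')>\s(x)}+\ind{\s(x')=\s(x)}$ into $\ind{\s(x')\ge\s(x)}$ and $\ind{\s(x')>\s(x)}$, which is exactly the discrete analogue of the integration-by-parts symmetry you anticipate --- so the tie-handling issue you flag as the delicate point is the one the paper itself resolves by that symmetric split.

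Where the proposal has a real gap is the final step you defer to ``collecting the constants.'' Carrying your own computation through gives $\E_t[\metric{PAMA}(\D_{1-t},\s,\half)]=\tfrac{1}{2}\int(1-G_1)\,dG_0+\tfrac{1}{2}\int G_0\,dG_1=\metric{AUC-ROC}(\s)$, i.e.\ the identity you actually derive is $\metric{AUC-ROC}=\E_t[\metric{PAMA}]$, with no residual factor of $\tfrac{1}{2}$. This cannot be reconciled with the statement as written: since $\metric{PAMA}\le1$, the displayed identity would force $\metric{AUC-ROC}\le\tfrac{1}{2}$, and the climatological forecaster $\s\equiv\tfrac{1}{2}$ (perfectly calibrated, $\metric{AUC-ROC}=\tfrac{1}{2}$, $\metric{PAMA}\equiv\tfrac{1}{2}$) already violates it while satisfying the version your derivation yields. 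The paper's appendix reaches the stated constant through a chain of lemmas that appear to drop normalization factors in the passage to $\Dhalf$ and in the prevalence reweighting, so the constant should not be taken on faith; you need to pin it down explicitly rather than asserting that it falls out.
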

This perspective reveals that AUC-ROC can be viewed as averaging thresholded accuracy across a distribution of class prevalences, albeit one that is induced implicitly by the score distribution of the model itself.
This provides a limited form of robustness to label shift in contrast to metrics like accuracy which are typically evaluated at a fixed class balance.

However, this interpretation also surfaces several critical limitations.
First, AUC-ROC entirely disregards calibration.
By evaluating only the ordering of scores, it fails to assess whether predicted probabilities are well-aligned with empirical outcomes, but correctly estimating probabilities is one of the crucial pieces of expected value-based decision theory, so the lack of good estimates undermines efforts in high stakes domains.
This issue is shared by other ranking metrics such as AUC-PR and \emph{net discounted cumulative gain}, which similarly ignore score magnitudes.

The historical development of AUC-ROC provides important context.
Ordinal metrics were popularized in fields like psychology, where class prevalences were fixed by design, and information retrieval, where results per page were fixed by the capacity constraint of the querying user regardless of quality.
Their subsequent adoption in machine learning reflects a shift in evaluation priorities away from deployment evaluation and toward the abstract comparison of new architectures and optimization techniques.
In such a setting, ordinal metrics offer a convenient, threshold-free mode of comparison.
However, such metrics are poorly aligned with the needs of real-world deployments, where thresholding, cost asymmetries, and calibration are often indispensable.

Second, although AUC-ROC evaluates calibrated scores for their performance across varying class balances, the distribution over these prevalences is not user-specified or interpretable.
It is instead a byproduct of the model's score distribution on a hypothetical balanced dataset.
Consequently, the underlying population over which AUC-ROC aggregates accuracy differs across models, making metric comparisons across models trained on the same data unreliable.
Finally, AUC-ROC does not allow the independent specification of label shift and asymmetric error costs.
Although we can interpret varying prevalences as including varying cost ratios through the relationship $\pi' = 1-c \otimes \pi$ \cite{hernandez11threshold}, doing so entangles cost asymmetry with shifts in class balance.

In summary, AUC-ROC offers a partial advantage over accuracy by aggregating across class balances, but its benefits are offset by its insensitivity to calibration, its implicit and model-dependent averaging distribution, and its inability to account for cost asymmetry.
While it captures ranking performance, it fails to reflect key aspects of real-world decision quality.

\section{Log Score: Label Shift, Calibration and Cost Asymmetry}
To evaluate the utility of a thresholded classifier under uncertain or varying class balance, it is critical to evaluate the calibration of its underlying score function across a range of label distributions.
Calibration metrics are only meaningful insofar as they are expressed in units that reflect application-specific costs.
In this context, cost asymmetry is not a minor adjustment but a first-order concern that must be explicitly accounted for.
However, a persistent challenge in real-world deployments is the difficulty of comparing the impact of miscalibration, measured in cost-aligned units, with the loss in performance attributable to poor sharpness or uncertainty in ranking \cite{vickers21}.

\subsection{Background}
Accuracy can be generalized to account for asymmetric costs and label shift, but it does not provide insight into performance across varying class balances.
AUC-ROC and AUC-PR focus on performance across class balances but disregard calibration, potentially missing significant issues and offering no direct link to the ground truth.
The log score (or cross entropy) can, owing to the Schervish representation, be viewed as an average of accuracy over a range of class balances whose log odds are uniform.
Unfortunately, as~\cite{vickers17} point out, this range is vast; too broad to be clinically useful.

Where there are only a handful of deployment settings, we can take a discrete average of performance in each, but as uncertainty grows we need a simpler and more flexible, continuous approach.  Recent attempts have focused on obtaining a central estimate and fitting a Beta distribution around it \citet{hand09,hand14,recentbeta24}.  Unfortunately, the dispersion of a Beta distribution remains unintuitive to most medical (and perhaps even most ML) practitioners.
Indeed, \citet{recentbeta24} do not provide a procedure to set the pseudocount $\alpha + \beta - 2$, and~\citet{hand14} do not quantify uncertainty but suggest always using 1 as "a sensible default value".

\subsection{Clipped Cross Entropy}
The core contribution of this paper is to propose: (1) a simple way to characterize uncertainty in label shift using lower and upper bounds on class balance, (2) a straightforward means to average accuracy over that range, and (3) a natural extension to two standard approaches for handling asymmetric costs.
As previously mentioned, there exists a duality between measures of calibration and mixtures of accuracy measures across different prevalences.
This result is somewhat unintuitive;  see \Cref{apdx:cost} for a derivation that sheds more light.
We extend this result to average over only a specific subinterval of prevalences.
Our new contribution demonstrates how this can also be applied when costs are asymmetric.
These formulas enable straightforward calculation of the average cost-sensitive performance of a classifier over a specified range of prevalences.

We begin by specifying a lower bound $a$ and an upper bound $b$ on the class balance.
These bounds can be obtained in direct consultation with domain experts, through surveys, or utilizing previous estimates.
In many cases, even very conservative bounds will substantially reduce the range of possible prevalences.
However, since there can be order-of-magnitude differences in the prevalence of a condition across different populations, we average uniformly over the log odds of the prevalence rather than linearly on the class prevalence itself.
This means, for example, that the interval between one part in one hundred and one part in ten thousand will be about half above and half below one part in one thousand.
We represent this log odds transformation by saying that $\logit{\pi} \sim \text{Uniform}(\logit{a}, \logit{b})$
where $\sigma(x) \triangleq \tfrac{1}{1+e^{-x}}$ is the typical sigmoid function.
In what follows, we define the normalization constant $\gamma$ based on the cost ratio $c$ and the bounds on class balance $a$ and $b$.

\begin{theorem}[Bounded DCA Log Score]
\label{thm:dca-log}
Let \( \s(x) \in [0,1] \) be a score function and \( c \in (0,1) \) be a cost parameter defining a decision threshold.
Then, the expected net benefit over a logit-uniform prior on prevalence satisfies
\[
\E [\metric{\em PAMNB}(\Dtgt, \s, \tau, c)]
\hspace{-.3em}= \hspace{-.3em}\gamma (
\E [\log \hspace{-.2em}|
  1\hspace{-.2em}-\hspace{-.15em}y\hspace{-.15em}-\hspace{-1em}\clip{1-b}{1-a}\hspace{-1em}(1\hspace{-.2em}-\hspace{-.1em}c\otimes\shalf(x))
|
\hspace{-.2em}-\hspace{-.2em}\log\hspace{-.2em} |
  1-\hspace{-.15em}y\hspace{-.15em}-\hspace{-1em}\clip{1-b}{1-a}\hspace{-1em}(1\hspace{-.15em}-\hspace{-.15em}y)
|
])
\]
where the expectation on the left-hand side is with respect to $\logit{\pi} \sim \text{Uniform}(\logit{a}, \logit{b})$,  and on the right-hand side is with respect to $(x,y) \sim \D_{(1-c)}$,  and where  $\gamma \triangleq \tfrac{2 (1-c)^{-1}}{\logit{b} - \logit{a}}$.
\end{theorem}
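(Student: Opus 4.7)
My plan is to expand $\E[\metric{PAMNB}]$ as a nested expectation over $\pi$ and over $(x,y) \sim \Dsrc$, swap the order via Fubini, evaluate the $\pi$-integral in closed form, and finally importance-reweight the outer expectation from $\Dsrc$ to $\D_{(1-c)}$. The key analytic simplification is that the logit-uniform prior on $\pi$ over $[a,b]$ has density $\tfrac{1}{Z\pi(1-\pi)}$ with $Z \triangleq \logit{b} - \logit{a}$. When this density is multiplied by the label-shift factor $\pi^y(1-\pi)^{1-y}$ appearing inside PAMNB, it collapses to $\tfrac{1}{Z(1-\pi)}$ when $y=1$ and $\tfrac{1}{Z\pi}$ when $y=0$; both have elementary antiderivatives $-\log(1-\pi)$ and $\log\pi$, which is why log scores appear naturally on the right-hand side.

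Second, I will replace the correctness indicator $\yhateqy[\pi][c]$ with explicit bounds on $\pi$. Inverting the threshold condition $\pi \otimes \shalf(x) \geq c$ gives $\pi \geq \pi^*(x)$, where $\pi^*(x) = c \otimes (1 - \shalf(x))$, so that $1 - \pi^*(x) = (1-c) \otimes \shalf(x)$. Intersecting $\{\pi \geq \pi^*(x)\}$ or $\{\pi < \pi^*(x)\}$ with the support $[a,b]$ truncates the integration endpoint at $\clip{a}{b}(\pi^*(x))$, equivalently $1 - \clip{1-b}{1-a}((1-c)\otimes \shalf(x))$. Evaluating each case then yields a log of a clipped quantity divided by a boundary endpoint of the support: $1-b$ when $y=1$ and $a$ when $y=0$. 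Because these endpoints coincide with $\clip{1-b}{1-a}(1-y)$ (which equals $1-b$ when $y=1$ and $1-a$ when $y=0$), the two subcases collapse into the single expression $\log|1-y - \clip{1-b}{1-a}(\cdot)| - \log|1-y - \clip{1-b}{1-a}(1-y)|$ asserted in the theorem, after verifying that every clipped value lies in $(0,1)$ so the absolute-value bars are only a bookkeeping device.

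Finally, I will translate the outer expectation from $\Dsrc$ to $\D_{(1-c)}$ using the label-shift weight $(\pi_0/(1-c))^y((1-\pi_0)/c)^{1-y}$; this telescopes cleanly against the surviving $\pi_0$- and cost-dependent prefactors in PAMNB, isolates a clean power of $(1-c)^{-1}$, and assembles the stated normalizer $\gamma$. The main obstacle is the piecewise case analysis when unifying the $y=0$ and $y=1$ contributions: for each label I need to check the three regimes where $\pi^*(x)$ lies below, inside, or above $[a,b]$, and verify that the clipping identity $\clip{a}{b}(\pi^*(x)) = 1 - \clip{1-b}{1-a}(1-\pi^*(x))$ matches on both sides of the derived equality in each regime. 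Tracking the constant $\gamma$ through the reweighting is otherwise routine, with the $2$ emerging from the symmetric accounting of the true-positive and true-negative contributions once everything is written under the single data expectation $\D_{(1-c)}$.
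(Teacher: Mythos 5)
Your route is genuinely different from the paper's. The paper proves \Cref{thm:dca-log} (in \Cref{apdx:log}, under the name PAMNB Log Score) by routing through a balanced reweighting to $\Dhalf$ via \Cref{lem:clip}: it decomposes the importance weight $\W[\pi]$ as $\W[\half]\cdot 2|(1-\pi)-y|$, swaps sum and integral, intersects the correctness set with the support, and antidifferentiates; the same lemma is then reused for the Bounded Brier and Bounded Log scores. You instead expand $\E[\metric{PAMNB}]$ directly, cancel the logit-uniform density $\tfrac{1}{\pi(1-\pi)}$ against the label-shift factor $\pi^y(1-\pi)^{1-y}$, integrate the $\pi$-integral in closed form, and reweight once from $\Dsrc$ to $\D_{(1-c)}$. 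Your version is more elementary and avoids the standalone lemma, and the structural pieces you flag --- the threshold inversion $\pi^*(x)=c\otimes(1-\shalf(x))$ with $1-\pi^*(x)=(1-c)\otimes\shalf(x)$, the identity $1-\clip{a}{b}(\pi^*(x))=\clip{1-b}{1-a}((1-c)\otimes\shalf(x))$, and the collapse of the boundary endpoints onto $\clip{1-b}{1-a}(1-y)$ --- are all correct and are exactly what produces the log terms.

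The gap is the normalizing constant. You assert that the $2$ in $\gamma$ ``emerges from the symmetric accounting of the true-positive and true-negative contributions,'' but that is a hand-wave, and carrying your own pipeline through carefully does not produce it. For $y=1$, the integrand $\tfrac{1}{\pi(1-\pi)}\cdot\tfrac{\pi}{\pi_0}$ integrates over the clipped correctness interval to $\tfrac{1}{\pi_0}\bigl[\ln\bigl(1-\clip{a}{b}(\pi^*(x))\bigr)-\ln(1-b)\bigr]$; for $y=0$ the analogous prefactor is $\tfrac{c}{(1-c)(1-\pi_0)}$; and the label-shift factor $(\pi_0/(1-c))^y((1-\pi_0)/c)^{1-y}$ turns both into a uniform $\tfrac{1}{1-c}$, giving $\gamma=\tfrac{(1-c)^{-1}}{\logit{b}-\logit{a}}$ --- half the stated value. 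A concrete sanity check: take $c=\tfrac{1}{2}$, $\pi_0=\tfrac{1}{2}$, two points with $\shalf\in\{0.2,0.8\}$ and matching labels, and $[a,b]=[0.3,0.7]$; then $\metric{PAMNB}\equiv 1$ on the whole interval so the left-hand side is $1$, while the stated right-hand side evaluates to $2$. So there is in fact a factor-of-two error in the theorem's $\gamma$ (and in the appendix proof, which silently drops a $\tfrac{1}{2}$ when converting $\E_{\Dhalf}[c^{1-y}(1-c)^y\,\cdot]$ into $\E_{\D_{1-c}}[\cdot]$, since $\E_{\Dhalf}[c^{1-y}(1-c)^y g]=\tfrac{1}{2}\E_{\D_{1-c}}[g]$). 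Do not accept the $2$ on faith; if you track the constant explicitly through your own derivation you will find it disagrees with the theorem as written.
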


By clipping the score, this formula is able to produce a closed form expression for the average net benefit over a range of class balances.
A major practical benefit of this score is that it is based on a pointwise calculation of loss, so confidence intervals can be trivially bootstrapped by resampling calculated losses, and each new draw only requires a weighted sum of these losses.
However, as previously discussed, the ability to handle asymmetric costs is a first-order consideration when using such units.
We focus on net benefit, because it is scaled the same way at different class balances, so it makes sense to add and average this quantity.
However, see \Cref{thm:wa-log} for a related derivation that holds if we want to use weighted accuracy, which is effectively a rescaled version of net benefit for which the maximum possible score of a perfect classifier is always 1, regardless of the class balance.

\subsection{Calibration, Label Shift and Asymmetric costs.}

The units of net benefit for any given class balance are clear: they are denominated by the value of a true positive.  Our newly introduced DCA log score is a mixture over class balances, but remains in units of true positives.
The Schervish representation gives us a simple way to describe what this is doing as a calibration metric as well; it is calibrating the model only over a particular bounded range of scores that is relevant to decisions and weighting scores in that range uniformly in log odds space.
This, then, is a measure of miscalibration that can be used to directly compare classifiers' effects in the world.
Indeed, \citet{proper07} showed that the unconstrained log score can be decomposed linearly into components of calibration and sharpness.
As \cite{vickers21} has argued, the need to weigh failures of calibration against failures of sharpness is a perpetual problem in the deployment of machine learning models in a medical context.

Moreover, our approach is flexible enough that we can generalize beyond accuracy (a well known result) to cost-sensitive metrics, as we demonstrate with DCA Log Score and Weighted Accuracy Log Score. See \Cref{apdx:log} for details.

\section{Application to Subgroup Decomposition}
The utility of our clipped cross entropy approach is highlighted in analyzing the following racial disparities in Accuracy and AUC-ROC on the publicly available subset of the eICU dataset \cite{eicu21}, for predictions of in-hospital mortality using APACHE IV scores.

\begin{figure}[ht]
\begin{minipage}{0.3\textwidth}
\includegraphics[width=\textwidth]{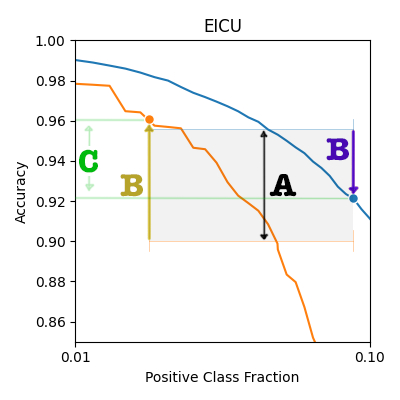}
\end{minipage}
\begin{minipage}{0.69\textwidth}
  (C) African American patients (orange) have noticeably better AUC-ROC than Caucasian patients (blue).
  However, we can decompose the difference in accuracy into (A) a difference in mechanism of prediction at equal class balances (i.e. same in-hospital mortality) and (B) a difference in the class balance at which accuracy is evaluated for the two groups.
  Doing so reveals that across the range of prevalences, performance is consistently lower for African American patients, indicating that the observed accuracy difference is entirely driven by label shift (B).
\end{minipage}
\end{figure}
\vspace{-1em}
\begin{figure}[ht]
\begin{minipage}{0.69\textwidth}
  (C) African American patients (orange) have noticeably better AUC-ROC than Caucasian patients (blue).
  However, we can plot the accuracy of a perfectly recalibrated model (dashed lines), and then decompose the average accuracy using the calibration–sharpness framework \cite{shen05,proper07}.
  We see that (A) the model gives sharper predictions for African American than Caucasian patients, but (B), it is badly miscalibrated for African American patients and has virtually no miscalibration loss for Caucasian patients.
  The most important aspect of this analysis is that we can directly compare the magnitudes of the two effects.
\end{minipage}
\begin{minipage}{0.3\textwidth}
\includegraphics[width=\textwidth]{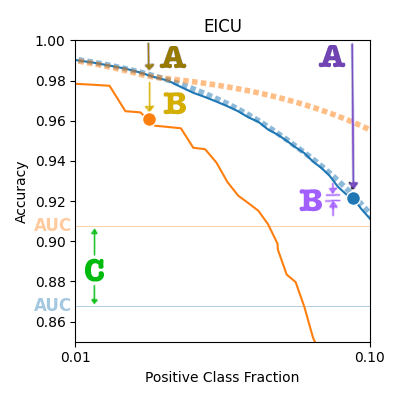}
\end{minipage}
\end{figure}

\vspace{1em}
Our analysis illustrates that ranking-based metrics fail to address miscalibration in practical scenarios in a way that cannot be readily fixed by adding a separate and incommensurable calibration metric.
Furthermore, they do not allow decompositions to consider the effects of differing class balance vs differing mechanisms of prediction.
Conversely, it shows that accuracy can be misleading because of its dependence on specific class balances, and that the average over a range of class balances gives a more intuitive picture of the reasons for gaps in performance.

In this particular example, we quantify uncertainty by computing confidence intervals (see \Cref{apdx:eicu} for visualization).
Given that mortality prevalence is around 10\%, and African American patients make up roughly 10\% of the dataset, we lack sufficient statistical power to generalize conclusions from the public subset to the full data.
However, the example is illustrative of the ways in which the analytical flexibility of the Schervish approach supports both practical development and principled scrutiny of deployed models.

\section{Discussion}

The prevailing paradigm for evaluating medical ML decision‐support systems often misaligns with evidence‐based medicine and beneficence by overlooking real‐world cost structures, disease prevalences, and calibration nuances.  We address these gaps through two main contributions:

\begin{enumerate}[leftmargin=*]
  \item \textbf{Causal distribution-shift grounding of AUC–ROC and accuracy.}  
    We show that AUC–ROC corresponds to the expected utility of a fixed threshold under a specific distribution over class prevalences, and that accuracy arises as its degenerate case at the cost ratio in the evaluation set.
  \item \textbf{Illustration of the Schervish Representation}
    Inspired by Schervish’s insight that “scoring rules are just a way of averaging all simple two‐decision problems into a single, more complicated, decision problem”~\citep{schervish89}, we reconceptualize calibration as a continuum of cost‐weighted binary decisions.
    This perspective clarifies when AUC–ROC and accuracy serve as valid proxies for clinical benefit (and when they obscure important cost asymmetries and class imbalances) and motivates their augmentation with interval‐specific calibration metrics like the DCA log score for deployment‐critical evaluations.
  \item \textbf{DCA log score for cost‐sensitive calibration.}  
    Unlike binned calibration or global metrics, the DCA log score isolates miscalibration over clinically relevant probability intervals that are dictated by anticipated cost ratios and base rate bounds, thereby making the practical impact of calibration errors explicit.
\end{enumerate}

Our framework further elucidates a conceptual tension between forecasting and classification uncertainty via their causal structures that has perhaps limited the uptake of evaluation measures used in the forecasting literature, such as Brier scores, in the medical setting.  Forecasting \(\Dtgt \rightarrow X \rightarrow Y\) assumes stability in \(P(Y\mid X)\) but is vulnerable to feature‐distribution shifts, while classification \(\Dtgt \rightarrow Y \rightarrow X\) assumes stable \(P(X\mid Y)\), enabling more robust performance predictions across thresholds and prevalences.  Recognizing this causal reversal underpins the targeted calibration assessment we propose.

\subsection{Limitations and Future Work}
While this work contributes a flexible and decision-theoretically grounded framework for evaluating predictive models under cost asymmetry and distributional shift, several challenges remain. These limitations point to open areas for theoretical refinement, methodological innovation, and practical implementation. Below, we highlight key directions for future research.

\textbf{Cost Uncertainty.} While our extension of the Decision Curve Analysis (DCA) log score to uncertain cost ratios captures realistic ambiguity in clinical tradeoffs, it introduces dilogarithmic expressions that are analytically opaque and computationally intensive. These forms limit practical interpretability and scalability. Future work could explore tractable approximations or surrogate objectives that preserve sensitivity to cost uncertainty while enabling smoother optimization and interpretive clarity.

\textbf{Sampling Variability under Label Shift.} In settings with symmetric misclassification costs, bootstrap resampling or binomial confidence intervals suffice for uncertainty estimation. However, under asymmetric costs, especially with population label shift, the evaluation metrics become sensitive to multinomial fluctuations in both the score distribution and the cost-weighted outcome prevalence. This introduces high variance and potential estimation bias. Quantifying and stabilizing this variability remains a significant challenge.

\textbf{Adaptive Base Rate Estimation.} Our framework presumes known deployment class prevalences. In practice, these rates may be uncertain or drift over time due to changing patient populations, care protocols, or screening policies.  Jointly estimating prevalence and adjusting probabilistic predictions in such regimes introduces an additional source of uncertainty.  Future work could combine the error properties of the prevalence estimation along with threshold selection and cost evaluation.

\textbf{Asymmetric Cost Parameterization.} We adopt a general framework for asymmetric cost modeling, but the semantics of varying cost ratios remain under-theorized.  At a fixed cost ratio they all produce the same results, but different parameterizations introduce different scaling factors into the overall costs, as well as changing the meaning of "uniform uncertainty", leading to different properties when averaging.  A systematic comparative study of these properties could yield robust and usable guidelines for choosing a parameterization.
\medskip

By combining decision-theoretic tools, causal framing, and clinically grounded metrics of calibration, this work moves toward evaluation methodologies that are both conceptually principled and actionable in real-world medical settings. Continued advances will require deeper integration of uncertainty quantification, model adaptivity, and domain-informed cost modeling.

\section*{Acknowledgements} This work was generously supported by the MIT Jameel Clinic in collaboration with Massachusetts General Brigham Hospital.

\bibliographystyle{abbrvnat}
\bibliography{references}

\appendix
\section{Calibration}
\label{apdx:calibration}
The weather forecasting literature focuses on what are known as strictly proper scoring rules: those metrics that have the property that a forecaster is correctly incentivized to report their actual beliefs about the probability of the event.  At first glance this seems a bit distant from binary classifier evaluation.  After all, action is generally binary; we really want the weather report to tell us whether to take an umbrella, not to give us 3 decimal places of precision on the long run frequency with which it would rain.

\subsection{Asymmetric Cost}
\label{apdx:asymmetric_cost}
However, a calibrated, thresholded binary classifier has an immensely useful property: we know how to change the threshold to trade off false positives for false negatives if the class balance or the cost ratio changes.  The optimality condition for choosing a threshold requires that the first derivative of the expected value be zero.  This is equivalent to saying that the expected utility of assigning points exactly at the threshold to either class should be the same:
\begin{align*}
\E_{x,y:\s(x)=\tau} V(y,0) &= \E_{x,y:\s(x)=\tau} V(y,1)
\\\implies P(y=1|\s(x)=\tau) &= \frac{V(0,1)-V(0,0)}{(V(0,1)-V(0,0))+(V(1,1)-V(1,0))}
\end{align*}
As a result, we generally call the quantity on the right $c$ and use it to describe the asymmetry of the error costs.

Solving this for $\tau$ requires us to at least implicitly estimate $\tilde{\s}(\tau) = P(y=1|\s(x)=\tau)$.  If we add the constraint of monotonicity to $\tilde{\s}(\tau)$, then this problem is known as isotonic regression, and there are well-known algorithms for solving it.  Assuming the existence of a good estimator $\tilde{s}(\tau)$ for this quantity, then $\tilde{\s}(\s(x))$ is of course a calibrated estimator for $P(y=1|x)$.  Using the same classifier at varying cost asymmetries requires that the classifer be, at minimum, implicitly calibrated; isotonic regression is in fact how such classifiers are calibrated.

It is, of course, possible to develop an estimator calibrated only at $\s(x)=c$; for any point higher or lower, ordinal comparison alone is enough to make a decision.  A classifier optimized in this fashion may be wildly unreliable at other values of $\s(x)$, and our calibration may simply give us two scores: higher than $c$ and lower than $c$.  If so, the condition of calibration is almost trivially satisfied: it only requires a binary predicted label and statistics for how often the classifier is correct in either case (PPV and NPV).

\subsection{Label Shift}
\label{apdx:label_shift}
In the weather forecasting literature, it is explicitly understood that $\Dtgt \rightarrow X \rightarrow Y$, which is to say that rather than today's atmospheric conditions being emanations of tomorrow's decision of whether to rain or not, the evolution on physical principles of today's conditions leads to tomorrow's weather.  As such, the idea of label shift is incoherent.  The study of changes in classifier performance when $P(y)$ changes in this setting is known as covariate shift; this is out of scope for this paper.

The machine learning evaluation literature does acknowledge links between label shift and calibration.
However, the setting is more abstract, with CDFs taken as given, and varying thresholds interpreted as a response to varying class balances without a clear enumeration of assumptions.

\subsubsection{Notation}
\label{apdx:label_shift_notation}

\begin{lemma}[Importance Sampling as $\ell_1$ distance]
\label{lem:sample}
Consider the standard importance sampling weights to move from the training ($\pi_0$) to the deployment ($\pi$) distribution.  Label shift always holds when reweighting data by class because after we stratify by class, we do not change the distribution within the class.
\begin{align*}
\W[\pi][y'] &\triangleq \frac{\mathbb{P}_\pi(x,y=y')}
{\mathbb{P}_{\pi_0}(x,y=y')}
\\&= \underbrace{\frac{\mathbb{P}_\pi(x|y=y')}
{\mathbb{P}_{\pi_0}(x|y=y')}}_{\text{one, by label shift}}
\frac{\mathbb{P}_\pi(y=y')}
{\mathbb{P}_{\pi_0}(y=y')}
\\[1em]&= \frac{\half}{\mathbb{P}_{\pi_0}(y=y')}
\frac{\mathbb{P}_\pi(y=y')}{\half}
\\[1em]&= \W[\half][y'] \cdot 2 \mathbb{P}_\pi(y=y')
\\[1em]&= \W[\half][y'] \begin{cases}
2(\pi - 0) & \text{if } y'=1
\\[1em]
2(1 - \pi) & \text{if } y'=0
\end{cases}
\\[1em]&= \W[\half][y']\;2\;|(1-\pi) - y'|
\end{align*}
\end{lemma}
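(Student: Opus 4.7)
The plan is to unfold the definition of the importance sampling weight $W(\pi_0 \to \pi; y')$ and then leverage the label shift assumption to reduce the joint density ratio to a marginal ratio over labels, after which the claim becomes a short case analysis. Concretely, I would first write $\mathbb{P}_\pi(x,y=y') = \mathbb{P}_\pi(x \mid y=y')\,\mathbb{P}_\pi(y=y')$ and analogously for $\pi_0$. This is pure conditioning and requires no assumption.

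Second, I would invoke the defining assumption of label shift, namely $\mathbb{P}_\pi(x \mid y=y') = \mathbb{P}_{\pi_0}(x \mid y=y')$, which causes the conditional densities in numerator and denominator to cancel. The remaining expression is then $\mathbb{P}_\pi(y=y') / \mathbb{P}_{\pi_0}(y=y')$, a ratio of scalars indexed by the class.

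Third, I would factor this ratio through the balanced intermediate distribution by inserting $\half/\half$: $\mathbb{P}_\pi(y=y')/\mathbb{P}_{\pi_0}(y=y') = \bigl(\half / \mathbb{P}_{\pi_0}(y=y')\bigr)\cdot\bigl(\mathbb{P}_\pi(y=y') / \half\bigr)$. The first factor is recognizable as $W(\pi_0 \to \half; y')$ (applying the same cancellation argument in reverse for the balanced target), and the second factor is $2\mathbb{P}_\pi(y=y')$.

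Finally, I would handle the two cases $y' = 1$ (giving $2\pi$) and $y' = 0$ (giving $2(1-\pi)$), then observe that both branches are captured uniformly by $2\,|(1-\pi)-y'|$, which completes the claim. There is no real obstacle here; the only mild subtlety is being careful that label shift is genuinely a statement about the conditional $\mathbb{P}(x\mid y)$ rather than the joint, so the cancellation step is justified without circularity. Everything else is bookkeeping and a two-branch case check, so this proof is essentially a direct chain of substitutions rather than an argument requiring a nontrivial idea.
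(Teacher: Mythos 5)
Your proposal is correct and follows exactly the same chain of substitutions the paper displays: factor the joint into conditional and marginal, cancel the conditionals by the label shift assumption, insert $\half/\half$ to factor through the balanced weight, and close with the two-branch case analysis collapsed to $2\,|(1-\pi)-y'|$. There is nothing to add; the approaches are the same.
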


\begin{definition}[Odds Multiplication]
\[
a \otimes b \triangleq \frac{ab}{ab + (1-a)(1-b)}
\]
\end{definition}

\begin{proposition}[Inverse]
\[
  a \otimes b = c \iff b = (1-a) \otimes c
\]
\end{proposition}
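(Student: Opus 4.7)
The plan is to prove the equivalence by passing to log-odds coordinates, where odds multiplication linearizes into ordinary addition. The key preliminary observation is the homomorphism $\logit{a \otimes b} = \logit{a} + \logit{b}$, which I would verify by directly expanding the definition of $\otimes$: the ratio $(a \otimes b)/(1 - a \otimes b)$ simplifies to $ab/((1-a)(1-b))$, and taking logarithms gives $\logit{a} + \logit{b}$. Combined with the trivial identity $\logit{1-a} = -\logit{a}$, this shows that the map $a \mapsto 1-a$ realizes additive inversion in log-odds space.

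Given this homomorphism, the proposition reduces to the tautology $\logit{a} + \logit{b} = \logit{c} \iff \logit{b} = \logit{c} + \logit{1-a}$, and applying the homomorphism in reverse identifies the right-hand side with $\logit{(1-a) \otimes c}$. Since $\sigma$ is a bijection on its domain, this yields the equivalence in probability space. A purely algebraic alternative bypasses the log-odds detour: cross-multiplying $a \otimes b = c$ gives $ab(1-c) = c(1-a)(1-b)$, and solving for $b$ produces $b = (1-a)c / ((1-a)c + a(1-c))$, which is precisely the expansion of $(1-a) \otimes c$; the reverse implication is the same identity read in the opposite direction.

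The only real obstacle is degenerate boundary behavior at $a, b, c \in \{0,1\}$, where $\otimes$ produces indeterminate $0/0$ forms and the logit diverges. Under the implicit assumption that the quantities in question are probabilities strictly in $(0,1)$ — as is natural given their roles as posterior probabilities and class prevalences throughout the paper — these cases do not arise and the short proof above suffices. If boundary values were to be admitted, each corner would need to be handled by direct inspection together with a convention for the indeterminate expressions (for instance, $a = 0$ forces $a \otimes b = 0$ so $c = 0$, while the putative inverse $(1-a) \otimes c = 1 \otimes 0$ is undefined without an extension).
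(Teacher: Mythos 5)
Your proof is correct. The paper actually states this proposition without supplying any proof at all, so there is no reference argument to compare against. Both of your routes work: the log-odds route is clean and exploits the fact that $\otimes$ is an abelian group operation on $(0,1)$ isomorphic (via $\logit{\cdot}$) to $(\mathbb{R},+)$, with $a \mapsto 1-a$ realizing the inverse; the direct cross-multiplication route is entirely elementary. One small remark on ordering: in the paper, the Inverse proposition appears \emph{before} the proposition ``Logit Odds Multiplication is Additive,'' so if you want a proof that respects the paper's sequence of results you should prefer the algebraic derivation, which depends only on the definition of $\otimes$. (The dependency is not actually circular — the logit homomorphism is proved by direct expansion and does not invoke Inverse — but the forward reference would be stylistically awkward.) Your flagging of the boundary cases $a,b,c \in \{0,1\}$ is also apt: the identity holds on the open cube $(0,1)^3$, and the paper implicitly works there since all the quantities involved (prevalences $\pi$, cost ratios $c$, calibrated scores) are assumed interior.
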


\begin{proposition}[Jacobian]
\[
  a \otimes b \frac{da}{a(1-a)} = a \otimes b \frac{d(a\otimes b)}{(a\otimes b)(1-a\otimes b)}
\]
\end{proposition}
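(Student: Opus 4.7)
The plan is to work in the logit parametrization, under which odds multiplication becomes addition and the differential forms on both sides become the same. First I would observe that $\frac{da}{a(1-a)} = d\,\logit(a)$, since $\frac{d}{da}\log\frac{a}{1-a} = \frac{1}{a(1-a)}$. The right-hand side has the identical shape with $a$ replaced by $a \otimes b$, so (holding $b$ fixed) it equals $d\,\logit(a \otimes b)$. Canceling the common factor $a \otimes b$ that multiplies both sides, the proposition therefore reduces to the differential identity $d\,\logit(a \otimes b) = d\,\logit(a)$.

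Next I would verify the key algebraic fact that $\otimes$ is additive in log-odds: $\logit(a \otimes b) = \logit(a) + \logit(b)$. From the definition, $a \otimes b = \frac{ab}{ab + (1-a)(1-b)}$ and $1 - a \otimes b = \frac{(1-a)(1-b)}{ab + (1-a)(1-b)}$, so $\frac{a \otimes b}{1 - a \otimes b} = \frac{a}{1-a}\cdot\frac{b}{1-b}$; taking logs gives the identity. This is precisely the content behind naming the proposition ``Jacobian'': the change-of-variables Jacobian from $a$ to $a \otimes b$ is exactly $\frac{(a \otimes b)(1 - a \otimes b)}{a(1-a)}$, which is the statement that the logit-uniform measure is invariant under odds multiplication by a fixed $b$.

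With log-odds additivity in hand, fixing $b$ and differentiating in $a$ gives $d\,\logit(a \otimes b) = d\,\logit(a)$, which when rewritten through step one becomes $\frac{d(a \otimes b)}{(a \otimes b)(1 - a \otimes b)} = \frac{da}{a(1-a)}$. Multiplying through by $a \otimes b$ yields the stated identity. There is no real obstacle: the only computation is the one-line simplification of $\frac{a \otimes b}{1 - a \otimes b}$, which is essentially what the $\otimes$ notation was defined to encode. An alternative direct route — applying the quotient rule to obtain $\frac{d(a \otimes b)}{da} = \frac{b(1-b)}{[ab + (1-a)(1-b)]^2}$ and dividing by $(a \otimes b)(1 - a \otimes b) = \frac{ab(1-a)(1-b)}{[ab + (1-a)(1-b)]^2}$ so that the square denominators cancel — is equally short but hides the logit-additive structure that makes the identity conceptually transparent and explains why the proposition is useful for changing variables in the logit-uniform prior over prevalence used in \Cref{thm:dca-log}.
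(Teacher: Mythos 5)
The paper states this proposition without providing a proof, so there is no paper argument to compare against; what you offer is a valid proof, and it is the one a careful reader would reconstruct. Your central observation — that $\frac{da}{a(1-a)} = d\,\logit{a}$ and that $\logit{a\otimes b}=\logit{a}+\logit{b}$ (holding $b$ fixed) immediately gives $d\,\logit{a\otimes b}=d\,\logit{a}$ — is both correct and the reason the proposition is titled ``Jacobian'': it says the logit-uniform measure is invariant under odds multiplication by a fixed constant, which is exactly the change of variables used later (e.g.\ in the proof of the PAMWA Log Score theorem, where $\pi' = (1-c)\otimes\pi$). Your cancellation of the cosmetic common factor $a\otimes b$ on both sides is also right; the factor is mathematically superfluous in the stated identity. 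One small organizational note: you invoke log-odds additivity of $\otimes$, which the paper records as a separate proposition that appears \emph{after} the Jacobian proposition in the appendix. There is no circularity — the additivity proof is a direct algebraic identity that does not use the Jacobian — but if you were inserting this into the paper you would either reorder the propositions or inline the one-line verification $\frac{a\otimes b}{1-a\otimes b} = \frac{a}{1-a}\cdot\frac{b}{1-b}$, as you in fact do. The alternative quotient-rule computation you sketch is equally valid and arrives at the same cancellation of the shared denominator $[ab+(1-a)(1-b)]^2$, but as you say it obscures the structural reason the identity holds.
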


\begin{proposition}[One minus distributes over odds multiplication]
\[
1 - (a \otimes b) = 1-a \otimes 1-b
\]
\begin{proof}
\begin{align*}
\frac{ab}{ab + (1-a)(1-b)}
+ \frac{(1-a)(1-b)}{ab + (1-a)(1-b)}
&=1
\\
a\otimes b
+
\quad
(1-a)\otimes(1-b)
&=1
\\
(1-a)\otimes(1-b)
&=
1 - a\otimes b
\end{align*}
\end{proof}
\end{proposition}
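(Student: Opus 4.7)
The plan is to unfold the definition of odds multiplication on both sides of the identity and verify that they reduce to the same rational expression in $a$ and $b$. The first step is to rewrite the left-hand side by combining terms over the common denominator $ab + (1-a)(1-b)$, which gives
$$1 - (a \otimes b) \;=\; \frac{ab + (1-a)(1-b) - ab}{ab + (1-a)(1-b)} \;=\; \frac{(1-a)(1-b)}{ab + (1-a)(1-b)}.$$
The second step is to expand the right-hand side by applying the definition with arguments $1-a$ and $1-b$; using $1-(1-a) = a$ and $1-(1-b) = b$, this yields
$$(1-a) \otimes (1-b) \;=\; \frac{(1-a)(1-b)}{(1-a)(1-b) + ab},$$
whose denominator agrees with that of the left-hand side by commutativity of addition. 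The third step is simply to conclude that the two expressions coincide.

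The only conceptual observation underlying the calculation is that the denominator $ab + (1-a)(1-b)$ is invariant under the involution $(a,b) \mapsto (1-a, 1-b)$; this symmetry is precisely why the "success" posterior $a \otimes b$ and the "failure" posterior $(1-a) \otimes (1-b)$ partition unity. Viewed probabilistically, $\otimes$ is Bayesian combination of independent odds, and the normalizing constant is the total mass of the two exhaustive joint outcomes (both succeed, both fail), so their normalized probabilities must sum to one. There is no genuine obstacle to the proof: the identity is essentially a one-line restatement of probability conservation, and a mildly slicker route would be to first establish $a \otimes b + (1-a)\otimes(1-b) = 1$ by summing the two fractions over the shared denominator, then rearrange.
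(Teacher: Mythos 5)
Your proof is correct and amounts to the same elementary algebra as the paper's: the paper sums $a\otimes b$ and $(1-a)\otimes(1-b)$ over the shared denominator to get $1$ and rearranges, whereas you unfold both sides of the target identity to the common fraction $\tfrac{(1-a)(1-b)}{ab+(1-a)(1-b)}$. You even flag the paper's route as the "mildly slicker" alternative at the end, so there is no real difference of approach here.
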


\begin{proposition}[Logit Odds Multiplication is Additive]
\[
\logit{a \otimes b} = \logit{a} + \logit{b}
\]
\begin{proof}
\[
\log\frac{
  \frac{ab}{ab + (1-a)(1-b)}
}{
  \frac{(1-a)(1-b)}{ab + (1-a)(1-b)}
}
= \log\frac{a}{1-a} + \log\frac{b}{1-b}
\]
\end{proof}
\end{proposition}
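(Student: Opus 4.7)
The plan is to unfold the definition of $\logit{\cdot}$ and compute the ratio $\frac{a \otimes b}{1 - a \otimes b}$ directly, then take the logarithm. Since $\logit{x} = \log\frac{x}{1-x}$, the entire claim reduces to showing that the argument of the outer log factors as a product of the corresponding odds of $a$ and $b$. This is the natural route because $\otimes$ is defined through a quotient whose numerator and denominator share the factor $ab + (1-a)(1-b)$, which should cancel in the odds ratio.

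First I would write $\logit{a\otimes b} = \log\frac{a\otimes b}{1-(a\otimes b)}$, substitute the definition of $\otimes$ for the numerator to obtain $\frac{ab}{ab+(1-a)(1-b)}$, and invoke the immediately preceding proposition (one minus distributes over odds multiplication) for the denominator, giving $1-(a\otimes b) = \frac{(1-a)(1-b)}{ab+(1-a)(1-b)}$. Dividing, the common factor $ab+(1-a)(1-b)$ cancels, leaving simply $\frac{ab}{(1-a)(1-b)}$.

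The last step is to recognize $\frac{ab}{(1-a)(1-b)} = \frac{a}{1-a}\cdot\frac{b}{1-b}$ and apply $\log(uv)=\log u + \log v$, which yields $\logit{a}+\logit{b}$ directly from the definition of the logit. An alternative I considered would be to start from $\logit{a}+\logit{b}$, exponentiate, and solve for $\sigma$ of the sum, showing it equals $a\otimes b$; but the direct quotient calculation above is shorter and parallels the style of the two preceding propositions in the appendix.

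There is essentially no obstacle to overcome. The one subtlety worth flagging is the domain: we need $a,b \in (0,1)$ so that $a\otimes b \in (0,1)$, the denominator $ab+(1-a)(1-b)$ is strictly positive, and all three logits are well-defined. This is implicit throughout the paper's use of $\otimes$ as an odds-space operation on probabilities, so it need not be stated explicitly in the proof itself. The result can thus be recorded in one display line, relying only on the definition of $\logit{\cdot}$, the definition of $\otimes$, and the previous one-minus proposition.
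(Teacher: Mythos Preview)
Your proposal is correct and follows essentially the same approach as the paper: the paper's proof is a single display that writes out $\log\frac{a\otimes b}{1-(a\otimes b)}$ with both numerator and denominator expanded over the common denominator $ab+(1-a)(1-b)$, lets that factor cancel, and reads off $\log\frac{a}{1-a}+\log\frac{b}{1-b}$. Your plan spells out the same cancellation in slightly more words and explicitly cites the one-minus proposition for the denominator, which the paper leaves implicit.
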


\begin{proposition}[Log Odds Interval Invariance]
\label{prop:logit-interval}
\[
\logit{1-c\otimes b} - \logit{1-c\otimes a}
=
\logit{b} - \logit{a}
\]
\begin{proof}
\begin{align*}
&\logit{1-c\otimes b} - \logit{1-c\otimes a}
\\&=
[\logit{1-c} + \logit{b}] - [\logit{1-c} + \logit{a}]
\\&=
[\logit{1-c} - \logit{1-c}] + [\logit{b} - \logit{a}]
\\&=
[\logit{b} - \logit{a}]
\end{align*}
\end{proof}
\end{proposition}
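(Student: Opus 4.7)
The plan is to reduce the identity directly to the preceding proposition on additivity of the logit under odds multiplication, namely $\logit{a \otimes b} = \logit{a} + \logit{b}$. Parsing $1 - c \otimes x$ as $(1-c) \otimes x$ (matching the convention used elsewhere in the excerpt, e.g., in the ``one minus distributes'' proposition), I would expand each of the two terms on the left-hand side as a sum of two logits: $\logit{(1-c) \otimes b} = \logit{1-c} + \logit{b}$, and the analogous expression with $a$ in place of $b$.

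Subtracting the two expansions, the two copies of $\logit{1-c}$ cancel, leaving exactly $\logit{b} - \logit{a}$, which matches the right-hand side. There is no substantive obstacle here: the proposition is an immediate corollary of additivity, and it simply records the fact that odds multiplication by a fixed factor acts as a translation in logit space, hence preserves interval lengths. The only detail worth flagging at the start of the proof is the parsing convention for $1 - c \otimes b$; making explicit that $\otimes$ binds tighter than subtraction here disambiguates the statement and makes the two-line calculation transparent. This lemma is precisely what justifies the later claim that a uniform prior on $\logit{\pi}$ over $[\logit{a}, \logit{b}]$ pushes forward to a uniform prior on $\logit{1 - c \otimes \pi}$ over an interval of the same length, which is the structural fact underlying the closed-form clipped cross-entropy formula in \Cref{thm:dca-log}.
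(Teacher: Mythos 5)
Your proof is correct and takes essentially the same route as the paper's: parse $1-c\otimes x$ as $(1-c)\otimes x$, apply the logit additivity proposition to each term, and cancel the common $\logit{1-c}$. The remark about the parsing convention and the downstream role of the lemma is a nice bonus but the core argument is identical.
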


\subsection{Prior-Adjustment}
\label{apdx:prior_adjustment}

With this notation, working with conditional probabilities is straightforward:
\begin{align*}
&P(y=1|\s(x),\Dtgt)
\\&= \frac{
  \overbrace{P(\s(x)|y=1,\Dtgt)}^\text{collect this}P(y=1|\Dtgt)
}{
  \underbrace{P(\s(x)|y=1,\Dtgt)}_\text{and this}P(y=1|\Dtgt)
  +
  \underbrace{P(\s(x)|y=0,\Dtgt)}_\text{and this}P(y=0|\Dtgt)
}
\\&= \underbrace{\frac{P(\s(x)|y=1,\Dtgt)}{P(\s(x)|y=1,\Dtgt) + P(\s(x)|y=0,\Dtgt)}}_\text{we can use label shift} \otimes P(y=1|\Dtgt)
\\&= \underbrace{\frac{P(\s(x)|y=1)}{P(\s(x)|y=1) + P(\s(x)|y=0)}}_\text{invariant to $\Dtgt$}
\otimes P(y=1|\Dtgt)
\end{align*}
\begin{align*}
P(y=1|\s(x),\Dtgt) \otimes P(y=0|\Dtgt)
&= \underbrace{\frac{P(\s(x)|y=1)}{P(\s(x)|y=1) + P(\s(x)|y=0)}}_\text{invariant to $\Dtgt$}
\\&=P(y=1|\s(x),\Dsrc) \otimes P(y=0|\Dsrc)
\\P(y=1|\s(x),\Dtgt)
&=P(y=1|\s(x),\Dsrc) \otimes P(y=0|\Dsrc) \otimes P(y=1|\Dtgt)
\\&=P(y=1|\s(x),\Dsrc) \otimes (1-\pi_0) \otimes \pi
\end{align*}

Here, the propagation of errors is straightforward: the log odds error will be the same size under both distributions, although of course the errors in probability space may be larger or smaller.  Thus we can specify the best choice of prior-adjusted score:

\subsection{Prior-Adjusted, Cost-Weighted Threshold}

Combining these two, we find that the best choice of decision threshold is
\[
\pi \otimes (1-\pi_0) \otimes \s(x) \geq c
\]
We will thus often refer to the induced optimal classifier $\kp[\pi][c]$.
\section{Derivation of Set-based Metrics}
\label{apdx:set}

We start with the most popular family of evaluation methods, which are based on accuracy but include cost-sensitive generalizations.  These only require a binary classifier, which we define as a function $\kappa(x) : \mathcal{X} \to \{0, 1\}$.

They differ along two axes: the way they factor in the cost of errors, and the way they factor in the class balance of the dataset.

\begin{table}[h]
\centering
\begin{tabular}{lccc}
\toprule
& \textbf{Empirical} & \textbf{Balanced} & \textbf{Prior-Adjusted Maximum} \\
\midrule
\textbf{Accuracy} & $\metric{Accuracy}$ & $\metric{BA}$ & $\metric{PAMA}$ \\
\textbf{Weighted Accuracy} & $\metric{WA}$ & $\metric{BWA}$ & $\metric{PAMWA}$ \\
\textbf{Net Benefit} & $\metric{NB}$ & $\metric{BNB}$ & $\metric{PAMNB}$ \\
\bottomrule
\end{tabular}
\caption{Taxonomy of set-based evaluation metrics. Each row represents a different approach to handling error costs, and each column represents a different approach to handling class balance.  Note that when balanced, the second and third rows are equivalent.}
\label{tab:set-metrics}
\end{table}

\subsection{Accuracy}

\begin{minipage}{0.48\textwidth}
\begin{definition}[Accuracy]
  The accuracy of a thresholded binary classifier $\kappa(x, \tau)$ is given by:
  $$
  \metric{Accuracy}(\Dsrc, \s, \tau) = \sum_{x,y \in \Dsrc} V_\half(y, \kp)
  $$
\end{definition}
\end{minipage}
\begin{minipage}{0.48\textwidth}
\centering
\begin{tabular}{ccc}
\toprule
$V_\half(y, \widehat{y})$ & $\substack{y=1 \\ \text{(Syphilis)}}$ & $\substack{y=0 \\ \text{(No Syphilis)}}$ \\
\midrule
$\substack{\widehat{y}=1 \\ \text{(Treat)}}$ & 1 & 0 \\[1em]
$\substack{\widehat{y}=0 \\ \text{(Don't treat)}}$ & 0 & 1 \\
\bottomrule
\end{tabular}
\captionof{table}{Value function for Accuracy}
\label{tab:v-accuracy}
\end{minipage}

This is impractically neutral with regard to cost in that $V(y, \widehat{y} = 1 - y)$ is not a function of y, which corresponds to the contingency table in \Cref{tab:v-accuracy}.
It is practical but neither neutral nor flexible with regard to distribution shift in the sense that it implicitly assumes: $H(\Dtgt) = \delta(\Dtgt = \Dsrc)$.

The simplest way to make this more neutral is to evaluate on a balanced dataset, which we denote as $\Dhalf$.  Mechanically, we can draw from this dataset using importance sampling, if we assume $\Dtgt\rightarrow Y\rightarrow X$ and therefore $P(X|Y,\Dtgt) = P(X|Y)$.

\begin{minipage}{0.48\textwidth}
\begin{definition}[Balanced Accuracy]
  The balanced accuracy of a thresholded binary classifier $\kappa(x, \tau)$ is given by:
  \begin{align*}
    &\metric{BA}(\Dsrc, \s, \tau)
    \\&= \sum_{x,y \in \Dhalf} V_\half(y, \kp)
    \\&= \sum_{x,y \in \Dsrc} W(\pi_0 \to \half;y)V_\half(y, \kp[][c])
    \\&= \sum_{x,y \in \Dsrc} V(y, \kp)
  \end{align*}
\end{definition}
\end{minipage}
\begin{minipage}{0.48\textwidth}
\centering
\begin{tabular}{ccc}
\toprule
$V(y, \widehat{y})$ & $\substack{y=1 \\ \text{(Syphilis)}}$ & $\substack{y=0 \\ \text{(No Syphilis)}}$ \\
\midrule
$\substack{\widehat{y}=1 \\ \text{(Treat)}}$ & $\frac{1}{2\pi_0}$ & 0 \\[1em]
$\substack{\widehat{y}=0 \\ \text{(Don't treat)}}$ & 0 & $\frac{1}{2(1-\pi_0)}$ \\
\bottomrule
\end{tabular}
\captionof{table}{Value function for Balanced Accuracy}
\label{tab:v-balanced-accuracy}
\end{minipage}

If more flexibility is desired, at the expense of neutrality, it is necessary to evaluate at an arbitrary class balance.
Moreover, evaluating how well a classifier performs at one specific threshold is less useful than understanding how the best threshold performs at a specific class balance.

\begin{minipage}{0.48\textwidth}
\begin{definition}[Prior-Adjusted Maximum Accuracy]
  The prior-adjusted maximum accuracy given a scoring function $\s$ and a threshold $\tau$ with a class balance $\pi$ is given by:
  \begin{align*}
    &\metric{PAMA}(\Dtgt, \s, \tau)
    \\&= \sum_{x,y \in \Dtgt} V_\half(y, \kp)
    \\&= \sum_{x,y \in \Dsrc} W(\pi \to \half;y)V_\half(y, \kp[\pi])
    \\&= \sum_{x,y \in \Dsrc} V(y, \kp[\pi])
  \end{align*}
\end{definition}
\end{minipage}
\begin{minipage}{0.48\textwidth}
\centering
\begin{tabular}{ccc}
\toprule
$V(y, \widehat{y})$ & $\substack{y=1 \\ \text{(Syphilis)}}$ & $\substack{y=0 \\ \text{(No Syphilis)}}$ \\
\midrule
$\substack{\widehat{y}=1 \\ \text{(Treat)}}$ & $\frac{\pi}{\pi_0}$ & 0 \\[1em]
$\substack{\widehat{y}=0 \\ \text{(Don't treat)}}$ & 0 & $\frac{1-\pi}{1-\pi_0}$ \\
\bottomrule
\end{tabular}
\captionof{table}{Value function for Shifted Accuracy}
\label{tab:v-shifted-accuracy}
\end{minipage}

\subsection{Weighted Accuracy}
This problem is further complicated by the need to realistically confront asymmetric costs.
Consider the syphilis testing case: unnecessary treatment is 10 to 100 times less costly than a missed detection.
We will use 1/30 as a representative value for exposition, as the exact mechanics of syphilis testing are not central to this work.

First, we consider the balanced case, which is more mathematically tractable.

\begin{minipage}{0.48\textwidth}
\begin{definition}[Balanced Weighted Accuracy]
  The balanced weighted accuracy of a score function $\s$ with a threshold $\tau$ is given by:
  \begin{align*}
    &\metric{BWA}(\Dsrc, \s, \tau, c)
    \\&= \sum_{x,y \in \Dhalf} (1-c)^yc^{1-y}V_\half(y, \kp[][c])
    \\&= \sum_{x,y \in \Dsrc} W(\pi_0 \to 1-c;y)V_\half(y, \kp[][c])
    \\&= \sum_{x,y \in \Dsrc} V(y, \kp[][c])
  \end{align*}
\end{definition}
\end{minipage}
\begin{minipage}{0.48\textwidth}
\centering
\begin{tabular}{ccc}
\toprule
$V(y, \widehat{y})$ & $\substack{y=1 \\ \text{(Syphilis)}}$ & $\substack{y=0 \\ \text{(No Syphilis)}}$ \\
\midrule
$\substack{\widehat{y}=1 \\ \text{(Treat)}}$ & $\frac{1-c}{\pi_0}$ & 0 \\[1em]
$\substack{\widehat{y}=0 \\ \text{(Don't treat)}}$ & 0 & $\frac{c}{1-\pi_0}$ \\
\bottomrule
\end{tabular}
\captionof{table}{Value function for Balanced Weighted Accuracy}
\label{tab:v-balanced-weighted-accuracy}
\end{minipage}

The minimum possible value of this expression is clearly 0 if $V(y,\widehat{y})=0$ for all $y$.  The maximum is also clear:
$$
\sum_{x,y\in\Dsrc} W(\pi_0 \to \half;y)W(\half \to 1-c;y)\ind{}
= (1-c) + c = 1
$$
The obvious combination of the two weighting terms is not correct, however.
$$
\sum_{x,y\in\Dsrc} W(\pi_0 \to \half;y)W(\half \to 1-c;y)W(\half \to \pi;y)\ind{}
= \pi(1-c) + (1-\pi)c \neq 1
$$

The most intuitive approach involves rescaling the value of the true and false positives to be in the 1:30 ratio and then normalizing such that the maximum possible value remains 1 regardless of class balance.
This is known as Weighted Accuracy.
This procedure of normalizing the metric so that 0 is the worst possible value and 1 the best is generally known in the forecast evaluation literature as a skill score, but in the medical decisionmaking literature this particular metric is generally called Weighted Accuracy.

\begin{minipage}{0.48\textwidth}
\begin{definition}[Weighted Accuracy]
  The weighted accuracy of a thresholded binary classifier $\kappa(x, \tau)$ is given by:
  \begin{align*}
    &\metric{WA}(\Dsrc, \s, \tau)
    \\&= \frac{
      \sum_{x,y \in \Dsrc} (1-c)^yc^{1-y}V_\half(y, \kp)
    }{
      \sum_{x,y \in \Dsrc} (1-c)^yc^{1-y}V_\half(y,y)
    }
    \\&= \sum_{x,y \in \Dsrc} V(y, \kp)
  \end{align*}
\end{definition}
\end{minipage}
\begin{minipage}{0.48\textwidth}
\centering
\begin{tabular}{ccc}
\toprule
$V(y, \widehat{y})$ & $\substack{y=1 \\ \text{(Syphilis)}}$ & $\substack{y=0 \\ \text{(No Syphilis)}}$ \\
\midrule
$\substack{\widehat{y}=1 \\ \text{(Treat)}}$ & $\frac{1-c}{(1-c)\pi_0 + c(1-\pi_0)}$ & 0 \\[1em]
$\substack{\widehat{y}=0 \\ \text{(Don't treat)}}$ & 0 & $\frac{c}{(1-c)\pi_0 + c(1-\pi_0)}$ \\
\bottomrule
\end{tabular}
\captionof{table}{Value function for Weighted Accuracy}
\label{tab:v-weighted-accuracy}
\end{minipage}

\subsection{Net Benefit}
However, this makes comparisons across different class balances less meaningful, since as the class balance varies, the normalizing factor changes.
As a result, the effective value of a true positive changes.
One common approach from the Decision Curve Analysis literature is instead to normalize the true positive to 1 and then rescale the false positive to keep the right ratio.
The baseline in the DCA literature is to always predict the negative class, whereas the weighted accuracy literature uses a baseline of always predicting the wrong class.
Since this is equivalent up to constants, we will modify the parameterization of Net Benefit to make it more directly comparable.

\begin{minipage}{0.48\textwidth}
\begin{definition}[Net Benefit]
  The net benefit of a scoring function $\s$ with a threshold $\tau$ is given by:
  \begin{align*}
    &\metric{NB}(\Dsrc, \s, \tau, c)
    \\&= \sum_{x,y \in \Dsrc} \left(\frac{c}{1-c}\right)^{1-y}V_\half(y, \kp)
    \\&= \sum_{x,y \in \Dsrc} V(y, \kp)
  \end{align*}
\end{definition}
\end{minipage}
\begin{minipage}{0.48\textwidth}
\centering
\begin{tabular}{ccc}
\toprule
$V(y, \widehat{y})$ & $\substack{y=1 \\ \text{(Syphilis)}}$ & $\substack{y=0 \\ \text{(No Syphilis)}}$ \\
\midrule
$\substack{\widehat{y}=1 \\ \text{(Treat)}}$ & 1 & 0 \\[1em]
$\substack{\widehat{y}=0 \\ \text{(Don't treat)}}$ & 0 & $\frac{c}{1-c}$ \\
\bottomrule
\end{tabular}
\captionof{table}{Value function for Net Benefit}
\label{tab:v-net-benefit}
\end{minipage}

The disadvantage of this approach is that it's unintuitive that the net benefit of a perfect classifier is not reliably 1, and instead depends on the class balance.  The advantage is that when comparing at different class balances, the value of a true positive and a true negative remain fixed, so measurements are directly compared on the same scale.

\subsection{Prior-Adjusted Maximum Cost-Weighted Metrics}

We can combine the prior-adjusted maximum value approach with the cost-weighted metrics to get two new metrics that make sense to compare across different class balances.

\begin{minipage}{0.58\textwidth}
\begin{definition}[Prior-Adjusted Maximum Weighted Accuracy]
  \label{def:pamwa}
  The prior-adjusted maximum weighted accuracy given a scoring function $\s$ and a threshold $\tau$ with a class balance $\pi$ is given by:
  \begin{align*}
    &\metric{PAMWA}(\Dtgt, \s, \tau, c)
    \\&= \frac{
      \sum\limits_{\Dtgt} (1-c)^yc^{1-y}V_\half(y, \widehat{y})
    }{
      \sum\limits_{\Dtgt} (1-c)^yc^{1-y}V_\half(y, y)
    }
    \\&= \frac{
      \sum\limits_{\Dhalf} [(1-c)\pi]^y[c(1-\pi)]^{1-y}V_\half(y, \widehat{y})
    }{
      \sum\limits_{\Dhalf} [(1-c)\pi]^y[c(1-\pi)]^{1-y}V_\half(y, y)
    }
    \\&= \frac{
      \sum\limits_{\Dsrc} [(1-c)\pi(1-\pi_0)]^y[c(1-\pi)\pi_0]^{1-y}V_\half(y,\widehat{y})
    }{
      \sum\limits_{\Dsrc} [(1-c)\pi(1-\pi_0)]^y[c(1-\pi)\pi_0]^{1-y}V_\half(y,y)
    }
    \\&= \sum\limits_{\Dsrc} V(y, \kp[\pi])
  \end{align*}
\end{definition}
\end{minipage}
\begin{minipage}{0.38\textwidth}
\centering
\begin{tabular}{cccc}
  \toprule
  $V$ & $1$ & &  $0$ \\
  \midrule
  $1$ & \multicolumn{2}{l}{$\frac{1}{2\pi_0}\frac{(1-c)\pi}{(1-c)\pi + c(1-\pi)}$} & 0\\[1em]
  $0$ & 0 & \multicolumn{2}{r}{$\frac{1}{2(1-\pi_0)}\frac{c(1-\pi)}{(1-c)\pi + c(1-\pi)}$} \\
  \bottomrule
\end{tabular}
\captionof{table}{Value function for Prior-Adjusted Maximum Weighted Accuracy}
\label{tab:v-shifted-weighted-accuracy}
\end{minipage}

\begin{proposition}[PAMA Equivalence]
\label{prop:pamwa-pama}
\[
\metric{PAMWA}(\Dtgt, \s, \tau, c)
=\metric{PAMA}(\D_{1-c \otimes \pi}, \s, \tau)
\]
\begin{proof}
\begin{align*}
  &\metric{PAMWA}(\Dtgt, \s, \tau, c)
  \\&= \frac{
    \sum\limits_{\Dtgt} (1-c)^yc^{1-y}V_\half(y, \widehat{y})
  }{
    \sum\limits_{\Dtgt} (1-c)^yc^{1-y}V_\half(y, y)
  }
  \\&= \frac{
    \sum\limits_{\Dhalf} [(1-c)\pi]^y[c(1-\pi)]^{1-y}V_\half(y, \widehat{y})
  }{
    \sum\limits_{\Dhalf} [(1-c)\pi]^y[c(1-\pi)]^{1-y}V_\half(y, y)
  }
  \\&= \frac{
    \sum\limits_{\D_{1-c \otimes \pi}} V_\half(y, \widehat{y})
  }{
    \sum\limits_{\D_{1-c \otimes \pi}} V_\half(y, y)
  }
  \\&=\metric{PAMA}(\D_{1-c \otimes \pi}, \s, \tau)
\end{align*}
\end{proof}
\end{proposition}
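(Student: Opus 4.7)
The plan is to reduce both sides to the same ratio of sums over $\D_{(1-c)\otimes\pi}$ by showing that the cost weighting $(1-c)^y c^{1-y}$ can be absorbed into a shift in the effective class balance. Concretely, I would expand $\metric{PAMWA}(\Dtgt, \s, \tau, c)$ via its first line in \Cref{def:pamwa}, and use \Cref{lem:sample} to move the summation from $\Dtgt$ back to $\Dhalf$; this converts the cost weight into $[(1-c)\pi]^y[c(1-\pi)]^{1-y}$ in both numerator and denominator (absorbing the prevalence factors $\pi^y(1-\pi)^{1-y}$ that come from the importance weight $W(\pi\to\half;y)$).

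The key observation is that the definition of odds multiplication gives
\[
(1-c)\otimes\pi = \frac{(1-c)\pi}{(1-c)\pi+c(1-\pi)},
\]
so $[(1-c)\pi]^y[c(1-\pi)]^{1-y}$ is proportional, via a factor depending only on $(c,\pi)$ and not on $y$, to $q^y(1-q)^{1-y}$ with $q \triangleq (1-c)\otimes\pi$. Reading this as the importance weight $W(\half \to q; y)$, a second application of \Cref{lem:sample} in the reverse direction rewrites the two sums as unweighted sums over $\D_q$.

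Finally, the overall $(c,\pi)$-dependent proportionality constant appears identically in numerator and denominator, so it cancels, leaving exactly the ratio that defines $\metric{PAMA}(\D_q, \s, \tau)$ (whose denominator equals the total mass, and cancels against the numerator normalization in the same way as in \Cref{def:pamwa}). The one substantive step buried in this chain is verifying that the implicit prediction $\widehat{y}$ is the same on both sides: in PAMWA it is the $\pi$-adjusted classifier thresholded under cost $c$, while in PAMA on $\D_q$ it is the $q$-adjusted classifier thresholded at $\half$. By the additivity identity $\logit{a \otimes b} = \logit{a} + \logit{b}$ together with $\logit{\half}=0$, both rules reduce to the same inequality on $\logit{\s(x)}$, so they agree pointwise.

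The main obstacle is essentially notational: parsing $1-c\otimes\pi$ as $(1-c)\otimes\pi$ (consistent with the paper's convention in $\shalf(x) = (1-\pi_0)\otimes\s(x)$) rather than $1-(c\otimes\pi)$ is essential, and bookkeeping of which $y$-dependent factors get absorbed into which importance weight is the only place where normalization errors could enter. Once the odds-multiplication identities and \Cref{lem:sample} are in hand, the remainder is routine algebra.
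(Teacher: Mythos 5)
Your proposal is correct and follows essentially the same route as the paper's proof: expand the ratio definition of $\metric{PAMWA}$, push the cost weight through the importance weighting $\Dtgt\to\Dhalf\to\D_{(1-c)\otimes\pi}$ via Lemma~\ref{lem:sample}, and cancel the $y$-independent normalization constant in numerator and denominator. Your additional check that the implicit classifier agrees on both sides (using the logit-additivity of $\otimes$) addresses a point the paper's chain leaves implicit by carrying $\widehat{y}$ symbolically, and is the right way to make the final identification with $\metric{PAMA}$ airtight.
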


\begin{minipage}{0.48\textwidth}
\begin{definition}[Prior-Adjusted Maximum Net Benefit]
  \label{def:pamnb}
  The prior-adjusted maximum net benefit of a scoring function $\s$ with a threshold $\tau$ with a class balance $\pi$ is given by:
  \begin{align*}
    &\metric{PAMNB}(\Dtgt, \s, \tau, c)
    \\&= \sum\limits_{\Dtgt} (\tfrac{\pi}{\pi_0})^{1-y}\,(\tfrac{c}{1-c}\tfrac{1-\pi}{1-\pi_0})^{1-y}\,V_\half(y, \widehat{y})
    \\&= \sum\limits_{\Dsrc} V(y, \kp[\pi])
  \end{align*}
\end{definition}
\end{minipage}
\begin{minipage}{0.48\textwidth}
\centering
\begin{tabular}{ccc}
\toprule
$V(y, \widehat{y})$ & $\substack{y=1 \\ \text{(Syphilis)}}$ & $\substack{y=0 \\ \text{(No Syphilis)}}$ \\
\midrule
$\substack{\widehat{y}=1 \\ \text{(Treat)}}$ & $\frac{\pi}{\pi_0}$ & 0 \\[1em]
$\substack{\widehat{y}=0 \\ \text{(Don't treat)}}$ & 0 & $\frac{c}{1-c}\frac{1-\pi}{1-\pi_0}$ \\
\bottomrule
\end{tabular}
\captionof{table}{Value function for Prior-Adjusted Maximum Net Benefit}
\label{tab:v-shifted-net-benefit}
\end{minipage}

We focus on the second because although the semantics of a single value are more confusing (since the perfect classifier is not normalized to 1), the values at different class balances are commensurable.

\subsection{Conclusion}

If we are willing to accept the causal diagram $\Dtgt\rightarrow Y\rightarrow X$, then we have tools available in different parts of the literature to broaden $V(y,\kappa(x,\tau))$ to capture asymmetric costs and move from $H(\Dtgt) = \delta(\D = \Dsrc)$ to $H(\D) = \delta(\D=\Dtgt)$ for any given $\pi$.
\section{Cost-sensitive Error Averaged Across Class Balances}
\label{apdx:cost}
A core obstacle to the use of proper scoring rules like the Brier Score to evaluate classifier performance under label shift in applied work is the ubiquity of asymmetric costs.
Traditional derivations of the equivalence between Mean Squared Error and Average Accuracy over a range of class balances have taken the CDF of the positive and negative classes as given.
These do not easily generalize to a world of asymmetric costs.
This appendix develops a rigorous mathematical framework for analyzing cost-weighted binary classifier performance averaged across a range of class balances.
The average of accuracy represents an integral over class balances of a sum over data of an $\ell^0$ ordering between the class balance and the model score on each data point.  The broad outline of the approach is as follows:
\begin{align*}
  &\underset{\ell^0(\pi)}{\smallint}
  \underbrace{
    \sum_{\Dtgt}
  }
  \ell^0(\s(x), \pi, y)
  d\pi
  \\[1em]&= \underset{\ell^0(\pi)}{\smallint}
  \overbrace{
    \underbrace{
      \sum_{\Dsrc}
      \W[\pi]
    }
  }^{\text{importance sampling}}
  \ell^0(\s(x), \pi, y)
  d\pi
  \\[1em]&= \underbrace{
    \underset{\ell^0(\pi)}{\smallint}
  }
  \overbrace{
    \underbrace{
      \sum_{\Dhalf}
    }
    \ell^1(\pi, y)
  }^{\text{importance weight is } \ell^1}
  \ell^0(\s(x), \pi, y)
  d\pi
  \\[1em]&= 
  \overbrace{
    \sum_{\Dhalf}
    \underbrace{
      \underset{\ell^0(\pi)}{\smallint}
    }
  }^\text{swap}
  \ell^1(\pi, y)
  \underbrace{
    \ell^0(\s(x), \pi, y)
  }
  d\pi
  \\[1em]&= 
  \sum_{\Dhalf}
  \overbrace{
    \underset{\ell^0(\pi) \cap \ell^0(\s(x), \pi, y)}{\smallint}
  }^\text{intersect}
  \underbrace{
    \ell^1(\pi, y)
  }
  d\pi
  \\[1em]&= 
  \sum_{\Dhalf}
  \overbrace{
    \ell^2(\s(x), y)
  }^\text{antidifferentiate}
  d\pi
\end{align*}
This flexible mathematical framework transforms a challenging integration problem into a tractable calculation using our core lemma. The key innovations (i.e., reframing importance weights and expressing classification correctness as set membership) provide deeper insight than the traditional integration-by-parts approach while generalizing beyond simple accuracy to various cost-sensitive losses. In the sections that follow, we demonstrate how this unified approach yields practical formulas for robust decision-making under class distribution uncertainty and asymmetric costs.

\subsection{Preliminaries}

See \Cref{apdx:label_shift_notation} for some of the notation used in this appendix.

\begin{definition}[Adjusted Score]
\[
  \s_{\pi'}(x) \triangleq \pi' \otimes \shalf(x)
\]
This represents the score optimally adjusted for the change in the prior probability.  See \Cref{apdx:label_shift} for more details.
\end{definition}

\begin{definition}[The set of $1-\pi$ for which $s$ gives the correct $\widehat{y}$]
\[
\chi(x,y) \triangleq \begin{cases}
  [1-y, \shalf(x)] & \text{if } y=1
  \\(\shalf(x), 1-y] & \text{if } y=0
\end{cases}
\]
\end{definition}
For proof that these are the right conditions, see \Cref{lem:adjust}.

\begin{lemma}[$\chi$ is the right set]
\label{lem:adjust}
\begin{align*}
1-\pi \in \chi(x,y)
\iff
y = \kp[\pi]
\end{align*}
\begin{proof}
We start by expressing the thresholding condition in terms of $1-\pi$.
\begin{align*}
& \pi \otimes (1-\pi_0) \otimes \s(x) \geq \tau
\\[1em]\implies& (1-\tau) \otimes (1-\pi_0) \otimes \s(x) \geq 1-\pi
\\[1em]\implies& \shalf(x) \geq 1-\pi
\\[1em]\implies& 1-\pi \in [0, \shalf(x)]
\end{align*}
Now we re-express the negation of this:
\begin{align*}
&\pi\otimes\shalf(x) \not\geq \tau
\\[1em]\implies& 1-\pi \not\in [0, \shalf(x)]
\\[1em]\implies& 1-\pi \in [\shalf(x), 1)
\end{align*}
We then compare to the label.
\begin{align*}
& \begin{cases}
  \pi\otimes\shalf(x) \geq \tau & \text{if } y=1
  \\\pi\otimes\shalf(x) < \tau & \text{if } y=0
\end{cases}
\\[1em]\implies& \begin{cases}
  1-\pi \in [1-y, \shalf(x)] & \text{if } y=1
  \\1-\pi \in (\shalf(x), 1-y] & \text{if } y=0
\end{cases}
\\[1em]\implies& 1-\pi \in \chi(x,y)
\end{align*}
\end{proof}
\end{lemma}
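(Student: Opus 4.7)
The plan is to convert the binary condition $y = \kp[\pi]$ into an inequality on $1-\pi$ using the algebraic properties of odds multiplication established earlier, and then check that the resulting inequality matches the case-split definition of $\chi(x,y)$.

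First I would unpack the notation: $\kp[\pi] = \kappa(\pi \otimes \shalf(x), \tau) = \ind{\pi \otimes \shalf(x) \geq \tau}$, where the default threshold $\tau = \half$. Using the fact that $\half$ is the identity for $\otimes$ together with the Inverse proposition ($a \otimes b = c \iff b = (1-a) \otimes c$) and the monotonicity of $\otimes$ in each argument, I would simplify
\[
\pi \otimes \shalf(x) \;\geq\; \tfrac{1}{2} \quad\Longleftrightarrow\quad \shalf(x) \;\geq\; (1-\pi) \otimes \tfrac{1}{2} \;=\; 1-\pi.
\]
(Alternatively, using the Logit-additivity proposition, the condition reduces to $\logit{\pi} + \logit{\shalf(x)} \geq 0$, equivalently $\logit{\shalf(x)} \geq \logit{1-\pi}$, which gives the same inequality by monotonicity of $\sigma^{-1}$.) So $\kp[\pi] = 1 \iff 1-\pi \leq \shalf(x)$ and $\kp[\pi] = 0 \iff 1-\pi > \shalf(x)$.

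Second, I would verify the two cases against the definition of $\chi$. For $y=1$, $\chi(x,1) = [0,\shalf(x)]$, so $1-\pi \in \chi(x,1) \iff 0 \leq 1-\pi \leq \shalf(x) \iff \kp[\pi] = 1 = y$ (the lower bound is automatic since $\pi \in [0,1]$). For $y=0$, $\chi(x,0) = (\shalf(x), 1]$, so $1-\pi \in \chi(x,0) \iff \shalf(x) < 1-\pi \leq 1 \iff \kp[\pi] = 0 = y$. Combining, $1-\pi \in \chi(x,y) \iff y = \kp[\pi]$.

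The main subtlety is the handling of the boundary $1-\pi = \shalf(x)$: because $\kappa$ uses the non-strict inequality $\ind{\,\cdot\, \geq \tau}$, the boundary point is assigned to $y=1$, and $\chi$ is defined precisely to reflect this by being closed on the $\shalf(x)$ side when $y=1$ and open when $y=0$. This is what makes $\chi(x,0)$ and $\chi(x,1)$ a disjoint partition of $[0,1]$, matching the binary output of $\kp[\pi]$. Beyond that bookkeeping, the proof is a mechanical application of the odds-multiplication identities.
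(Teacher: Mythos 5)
Your proof is correct and takes essentially the same route as the paper's: both reduce the thresholding condition $\pi \otimes \shalf(x) \geq \tau$ (with $\tau = \half$) to the inequality $1-\pi \leq \shalf(x)$ via the odds-multiplication algebra (the Inverse proposition plus the fact that $\half$ is the $\otimes$-identity), then conclude by a case split on $y$ against the definition of $\chi$. Your explicit remark about the boundary $1-\pi = \shalf(x)$ being assigned to $y=1$ and the two $\chi$-sets forming a disjoint partition of $[0,1]$ is a helpful bookkeeping note the paper leaves implicit, and it also sidesteps a small typo in the paper's intermediate step where the complement of $[0,\shalf(x)]$ is written as $[\shalf(x),1)$ rather than $(\shalf(x),1]$.
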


\begin{definition}[Clipping operator]
Also called projection or restriction in other contexts.
\[
\clip{a}{b}(x) \triangleq \max(a,\min(b,x))
\]
\end{definition}

\begin{lemma}[Intersecting $\chi$ with a closed interval]
This equality holds almost everywhere, because when the intersection is empty, the set on the right will be of measure zero.  Integrals of bounded functions over these two sets will give the same results.
\begin{align*}
&[a,b] \cap \chi(x,y)
\\[1em]&\overset{a.e.}{=}\begin{cases}
  [\clip{a}{b}(1-y),\; \clip{a}{b}(\shalf(x))] & \text{if } y=1
  \\ (\clip{a}{b}(\shalf(x)),\; \clip{a}{b}(1-y)] & \text{if } y=0
\end{cases}
\end{align*}
\end{lemma}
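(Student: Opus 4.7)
The plan is to prove this equality by case analysis on $y \in \{0,1\}$, in each case expanding both sides using the definitions of $\chi$ and of $\clip{a}{b}$, and then handling the empty-intersection edge cases separately to pinpoint the one measure-zero discrepancy. Throughout, I will use the natural assumption $0 \le a \le b \le 1$ inherited from the problem context (since $a,b$ bound a class balance and $\shalf(x) \in [0,1]$), so that $\clip{a}{b}(t) = \max(a,\min(b,t))$ on $[0,1]$.

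For $y=1$, we have $\chi(x,1) = [0,\shalf(x)]$, so the intersection with $[a,b]$ is $[a,\min(b,\shalf(x))]$ when $\shalf(x)\ge a$ and empty otherwise. On the right-hand side, $\clip{a}{b}(1-y) = \clip{a}{b}(0) = a$ and $\clip{a}{b}(\shalf(x)) = \max(a,\min(b,\shalf(x)))$. In the generic regime $\shalf(x) \ge a$ the two endpoints match and the intervals coincide; in the edge case $\shalf(x) < a$ the right-hand side collapses to the singleton $[a,a] = \{a\}$, which has Lebesgue measure zero. This singleton is the sole source of the almost-everywhere qualifier.

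For $y=0$, $\chi(x,0) = (\shalf(x),1]$, and the intersection with $[a,b]$ is $(\max(a,\shalf(x)),b]$ when $\shalf(x) < b$ and empty otherwise. On the right-hand side, $\clip{a}{b}(1-y) = \clip{a}{b}(1) = b$ and $\clip{a}{b}(\shalf(x))$ equals $\max(a,\shalf(x))$ precisely when $\shalf(x) \le b$. In the remaining regime $\shalf(x) > b$ we get $(\clip{a}{b}(\shalf(x)),\clip{a}{b}(1)] = (b,b] = \emptyset$, so the two sides agree \emph{exactly} in this case with no measure-zero slack.

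There is no substantive obstacle: the argument is a finite enumeration of how each clipped endpoint behaves as $\shalf(x)$ passes through the bounds $a$ and $b$. The only observation worth flagging is that the asymmetry between the open and closed endpoints in the definition of $\chi$ lines up so that the $y=0$ case produces a genuine empty set while the $y=1$ case produces a singleton, which is exactly why the statement is phrased almost everywhere rather than pointwise; since all downstream uses integrate bounded functions against Lebesgue measure, this suffices.
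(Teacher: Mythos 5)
The paper states this lemma without an explicit proof (only a one-sentence gloss inside the lemma statement), so there is no in-paper argument to compare against; your case analysis is a sensible way to fill the gap, and the $y=1$ case is handled correctly. However, there is a small slip in the $y=0$ case. You assert that $[a,b]\cap(\shalf(x),1]$ equals $(\max(a,\shalf(x)),\,b]$ whenever $\shalf(x)<b$, but when $\shalf(x)<a$ the true intersection is the closed interval $[a,b]$ (since $[a,b]\subseteq(\shalf(x),1]$), whereas your formula — and the lemma's right-hand side $(\clip{a}{b}(\shalf(x)),\,b]=(a,b]$ — drops the endpoint $a$. This is another measure-zero discrepancy and does not threaten the almost-everywhere conclusion, but it means your closing claim that the $y=0$ branch agrees \emph{exactly} and that the a.e.\ qualifier is attributable solely to the $y=1$ singleton is imprecise: both branches can contribute a measure-zero mismatch when $\shalf(x)<a$. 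Fixing this is a matter of adding one more subcase to your enumeration; the overall structure and conclusion of the proof are sound.
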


\subsection{Main Lemma}

\begin{lemma}[Average over Class Balance of a Sum over Data of a Cost-Weighted Correctness Condition]
\label{lem:clip}
\begin{align*}
&(b-a)\E_{\pi \sim \text{Uniform}(a,b)}
\E_{(x,y) \sim \Dtgt}
\C \yhateqy[\pi]
\\[1em]&=\E_{(x,y) \sim \Dhalf}
\int^{\clip{1-b}{1-a}(1-y)}_{p=\clip{1-b}{1-a}(\shalf(x))} 2(p - y) \Cp dp
\end{align*}
\end{lemma}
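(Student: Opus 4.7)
The plan is to execute the schematic outlined at the start of Appendix~\ref{apdx:cost}. First, I would absorb the $(b-a)$ prefactor by rewriting the uniform expectation over $\pi$ as a Lebesgue integral, yielding $\int_a^b \E_{(x,y)\sim\Dtgt}\, \C\,\yhateqy[\pi]\, d\pi$. Next, I would change the sampling distribution from $\Dtgt$ to $\Dhalf$ using Lemma~\ref{lem:sample}: the ratio $\W[\pi][y] / \W[\half][y]$ equals $2|(1-\pi)-y|$, which is the importance weight needed to go from balanced sampling to $\pi$-sampling, producing $\int_a^b \E_{(x,y)\sim\Dhalf} 2|(1-\pi)-y|\,\C\,\yhateqy[\pi]\, d\pi$.

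Then I would apply Fubini to exchange the integral with the expectation and perform the change of variables $p = 1-\pi$. This maps $[a,b]$ to $[1-b, 1-a]$ (the sign flip from $dp = -d\pi$ is absorbed into the reversal of the orientation) and turns $\C = C(\pi;y)$ into $\Cp = C(1-p;y)$, giving $\E_{(x,y)\sim\Dhalf} \int_{1-b}^{1-a} 2|p-y|\,\Cp\,\yhateqy[\pi]\, dp$.

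The key step is then to invoke Lemma~\ref{lem:adjust} to replace the correctness indicator with a set-membership condition: $\yhateqy[\pi] = 1$ if and only if $p = 1-\pi \in \chi(x,y)$, so the indicator restricts the domain to $[1-b, 1-a] \cap \chi(x,y)$. The clipping lemma rewrites this intersection, almost everywhere, as the interval whose endpoints are $\clip{1-b}{1-a}(1-y)$ and $\clip{1-b}{1-a}(\shalf(x))$.

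Finally, I would reconcile the unsigned $|p-y|$ in the integrand with the signed $(p-y)$ stated in the conclusion via a case analysis on $y$. For $y=0$, we have $\shalf(x) \leq 1 = 1-y$, so the clipped interval runs from $\clip{1-b}{1-a}(\shalf(x))$ up to $\clip{1-b}{1-a}(1)$ in the same orientation as the stated limits, and $|p-0|=p=(p-y)$ already agrees. For $y=1$, $\shalf(x) \geq 0 = 1-y$, so the clipped interval $[\clip{1-b}{1-a}(0),\, \clip{1-b}{1-a}(\shalf(x))]$ is traversed in the reverse of the stated orientation; the induced sign flip converts $|p-1| = 1-p$ into $p-1 = p-y$, matching the conclusion exactly. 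The main obstacle I anticipate is this final bookkeeping step, where the sign of $p-y$, the orientation of the clipped interval, and the closed-versus-half-open distinction in the definition of $\chi$ all need to line up simultaneously. The almost-everywhere qualifier from the clipping lemma carries no cost, because the integrand is bounded and the endpoint discrepancies occur on a Lebesgue-null set.
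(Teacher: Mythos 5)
Your proposal is correct and follows essentially the same route as the paper's proof: importance reweighting via Lemma~\ref{lem:sample}, Fubini, the change of variables $p=1-\pi$, Lemma~\ref{lem:adjust} to turn the correctness indicator into the set-membership condition $p\in\chi(x,y)$, the clipping lemma, and the final case analysis on $y$ to reconcile $|p-y|$ with $(p-y)$. The only superficial difference is that you phrase the reweighting as a single ratio from $\Dhalf$ to $\Dtgt$, whereas the paper factors it through $\Dsrc$; this is the same computation.
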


\begin{proof}
\newcommand{\CHI}{\ind{1-\pi \in \chi(x,y)}}
\begin{align*}
&\int_{\pi=a}^b \E_{(x,y) \sim \Dtgt} \C \yhateqy[\pi] d\pi
\\[1em]&=\int_{\pi=a}^b \E_{(x,y) \sim \Dtgt} \C \CHI d\pi
\\[1em]&=
\int_{\pi=a}^b \frac{1}{|\Dsrc|}\sum_{(x,y) \in \Dsrc} \W[\pi] \C \CHI d\pi
\\[1em]&=
\int_{\pi=a}^b \frac{1}{|\Dsrc|}\sum_{(x,y) \in \Dsrc} \W \;2\,|(1-\pi) - y|\, \C \CHI d\pi
\\[1em]&=
\frac{1}{|\Dsrc|} \sum_{(x,y) \in \Dsrc} \W
\int_{\pi=a}^b 2|(1-\pi) - y| \C \CHI d\pi
\end{align*}
We will now focus only on the inner integral.

\begin{align*}
&\int_{\pi=a}^b 2|(1-\pi) - y| \C \CHI d\pi
\\[1em]&=\int_{p=1-b}^{1-a} 2|p - y| \Cp \ind{p \in \chi(x,y)} dp
\\[1em]&=\underset{[1-b,1-a]\,\cap\,\chi(x,y)}{\smallint} 2|p - y| \Cp dp
\\&\qquad\text{by case analysis on y}
\\[1em]&=\int^{\clip{1-b}{1-a}(1-y)}_{p=\clip{1-b}{1-a}(\shalf(x))} 2(p - y) \Cp dp
\end{align*}
Combining the two parts, we get the result.
\begin{align*}
&(b-a)\E_{\pi \sim \text{Uniform}(a,b)}
\E_{(x,y) \sim \Dtgt}
\C \yhateqy[\pi]
\\[1em]&=
\int_{\pi=a}^b \frac{1}{|\Dsrc|}\sum_{(x,y) \in \Dtgt} \C \yhateqy[\pi] d\pi
\\[1em]&=
\frac{1}{|\Dsrc|} \sum_{(x,y) \in \Dsrc} \W
\int_{\pi=a}^b 2|(1-\pi) - y| \C \CHI d\pi
\\[1em]&=
\frac{1}{|\Dsrc|} \sum_{(x,y) \in \Dsrc} \W
\int^{\clip{1-b}{1-a}(1-y)}_{p=\clip{1-b}{1-a}(\shalf(x))} 2|p - y| \Cp dp
\\[1em]&=\E_{(x,y) \sim \Dhalf}
\int^{\clip{1-b}{1-a}(1-y)}_{p=\clip{1-b}{1-a}(\shalf(x))} 2(p - y) \Cp dp
\end{align*}
\end{proof}

\subsection{Example Use}

\begin{theorem}[Bounded Brier Score]
\begin{align*}
&(b-a)\E_{\pi \sim [a,b]} \metric{PAMA}(\Dtgt, \s, \tau)
\\[1em]&= \E_{(x,y) \sim \Dhalf}
\left[
\left(\clip{1-b}{1-a}(1-y)\;-y\right)^2-\left(\clip{1-b}{1-a}(\shalf(x))\;-y\right)^2
\right]
\end{align*}
\end{theorem}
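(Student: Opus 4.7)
The plan is to recognize that this Bounded Brier Score identity is essentially an immediate specialization of Lemma \ref{lem:clip}, the main technical lemma of Appendix \ref{apdx:cost}. The only reason it deserves a separate statement is the antidifferentiation that reveals the squared-error form. So I would not attempt a fresh derivation: I would invoke Lemma \ref{lem:clip} directly with the cost factor taken to be trivial, and then perform a one-line integration.

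Concretely, I would proceed in three steps. First, I would observe that $\metric{PAMA}(\Dtgt, \s, \tau) = \mathbb{E}_{(x,y)\sim\Dtgt}\yhateqy[\pi]$, i.e.\ it is precisely the expected correctness of the prior-adjusted classifier under $\Dtgt$, with no cost weighting. Thus it matches the left-hand side of Lemma \ref{lem:clip} under the specialization $C(\pi;y) \equiv 1$. Applying the lemma in this case yields
\begin{align*}
(b-a)\,\E_{\pi \sim [a,b]} \metric{PAMA}(\Dtgt, \s, \tau)
&= \E_{(x,y) \sim \Dhalf} \int^{\clip{1-b}{1-a}(1-y)}_{p=\clip{1-b}{1-a}(\shalf(x))} 2(p - y)\, dp .
\end{align*}
Second, I would note that with no cost weighting, the integrand $2(p-y)$ has the elementary antiderivative $(p-y)^2$. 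Third, I would evaluate at the two clipped endpoints to obtain the difference of squares on the right-hand side of the theorem, matching the stated formula.

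There is no real obstacle here. The one subtlety worth flagging in the write-up is the direction of the bounds of integration: in the definition of $\chi(x,y)$, the interval depends on whether $y=0$ or $y=1$ and in one case is $[1-y,\shalf(x)]$ while in the other is $(\shalf(x),1-y]$. After clipping to $[1-b,1-a]$, the antiderivative must be evaluated with $\clip{1-b}{1-a}(1-y)$ as the upper limit in the $y=1$ case and $\clip{1-b}{1-a}(1-y)$ as the (opposite) limit in the $y=0$ case; because $(p-y)^2$ is an even function of $p-y$ and the integrand $2(p-y)$ has matching sign, the two cases collapse into the single unsigned difference-of-squares expression in the theorem. Once that sign-bookkeeping is verified (exactly as handled inside Lemma \ref{lem:clip}), the theorem follows with no further computation.
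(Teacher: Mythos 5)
Your proposal is correct and follows exactly the same route as the paper: it invokes Lemma~\ref{lem:clip} with $\C \equiv 1$, antidifferentiates $2(p-y)$ to $(p-y)^2$, and evaluates at the clipped endpoints. The sign-bookkeeping you flag is indeed already absorbed into the statement of Lemma~\ref{lem:clip}, so your three-step write-up matches the paper's proof in both substance and structure.
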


\begin{proof}
Straightforward application of \Cref{lem:clip} with $\C = \mathbf{1}$.
\begin{align*}
&(b-a)\E_{\pi \sim [a,b]} \metric{PAMA}(\Dtgt, \s, \tau)
\\[1em]&= (b-a)\E_{\pi \sim [a,b]} \C \yhateqy[\pi]
\\[1em]&\qquad\text{ by \Cref{lem:clip}}
\\[1em]&= \E_{(x,y) \sim \Dhalf}
\int^{\clip{1-b}{1-a}(1-y)}_{p=\clip{1-b}{1-a}(\shalf(x))} 2(p-y)\C dp
\\[1em]&= \E_{(x,y) \sim \Dhalf}
\int^{\clip{1-b}{1-a}(1-y)}_{p=\clip{1-b}{1-a}(\shalf(x))} 2(p-y)dp
\\[1em]&\qquad\text{ by antidifferentiation}
\\[1em]&= \E_{(x,y) \sim \Dhalf}
\left[(p-y)^2\right]
_{p=\clip{1-b}{1-a}(\shalf(x))}
^{p=\clip{1-b}{1-a}(1-y)}
\\[1em]&= \E_{(x,y) \sim \Dhalf}
\left[
\left(\clip{1-b}{1-a}(1-y)\;-y\right)^2-\left(\clip{1-b}{1-a}(\shalf(x))\;-y\right)^2
\right]
\end{align*}
\end{proof}
\section{Log Scores with Asymmetric Costs}
\label{apdx:log}

\begin{theorem}[Bounded Log Score]
\label{thm:log}
\begin{align*}
&\left[\logit{b}-\logit{a}\right]
\E_{\logit{\pi} \sim [\logit{a}, \logit{b}]} \metric{PAMA}(\Dtgt, \s, \tau)
\\&= 2\E_{(x,y) \sim \Dhalf}
\log\left|\clip{1-b}{1-a}(1-y)\;-(1-y)\right|
-\log\left|\clip{1-b}{1-a}(\shalf(x))\;-(1-y)\right|
\end{align*}
\end{theorem}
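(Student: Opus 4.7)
The plan is to reduce the statement to the main integration identity \Cref{lem:clip} by absorbing the logit-uniform measure on $\pi$ into the cost slot $C(\pi;y)$.

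First, I would change variables from $\logit{\pi}$ to $\pi$ on the left-hand side. Since $d(\logit{\pi}) = \frac{d\pi}{\pi(1-\pi)}$, one has
\[
\left[\logit{b}-\logit{a}\right]\, \E_{\logit{\pi} \sim [\logit{a}, \logit{b}]} \metric{PAMA}(\Dtgt, \s, \tau) \;=\; \int_a^b \frac{\metric{PAMA}(\D_\pi, \s, \tau)}{\pi(1-\pi)}\, d\pi.
\]
Then I would recognize the integrand as $(b-a)\,\E_{\pi \sim [a,b]} \E_{(x,y)\sim \D_\pi} C(\pi;y)\,\yhateqy[\pi]$ for the particular choice $C(\pi;y) \triangleq \frac{1}{\pi(1-\pi)}$, which is symmetric under $\pi \mapsto 1-\pi$ and constant in $y$.

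Second, I apply \Cref{lem:clip} with this $C$. By the $\pi \leftrightarrow 1-\pi$ symmetry, $\Cp = C(1-p;y) = \frac{1}{p(1-p)}$, and the right-hand side of the lemma becomes
\[
\E_{(x,y) \sim \Dhalf} \int_{\clip{1-b}{1-a}(\shalf(x))}^{\clip{1-b}{1-a}(1-y)} \frac{2(p-y)}{p(1-p)}\, dp.
\]
Third, I evaluate this inner integral by partial fractions: $\frac{2(p-y)}{p(1-p)} = -\frac{2y}{p} + \frac{2(1-y)}{1-p}$, so the antiderivative is $-2y\log|p| - 2(1-y)\log|1-p|$. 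For $y \in \{0,1\}$ this collapses uniformly to $-2\log|p - (1-y)|$, and substituting the clipped endpoints yields exactly a difference of the two log terms that appear on the right-hand side of the theorem (with signs and endpoint ordering inherited from the oriented limits of \Cref{lem:clip}).

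The step that requires the key insight is the first one: recognizing that a logit-uniform prior on class balance is operationally equivalent to averaging over a linear-uniform prior with cost weight $\frac{1}{\pi(1-\pi)}$, so that the integration machinery built for the bounded Brier score transports directly to the log-score setting. Once that reinterpretation is in place, the remainder is routine partial-fraction antidifferentiation, structurally parallel to the bounded Brier derivation in \Cref{apdx:cost}. A minor secondary worry is bookkeeping the $(1-y)$-dependent orientation of the endpoints in \Cref{lem:clip} so that the antiderivative is substituted with the correct sign in both the $y=0$ and $y=1$ branches; beyond that, no new technical device is needed.
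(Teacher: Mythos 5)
Your route is the paper's route: absorb the logit Jacobian into the cost weight $C(\pi;y)=\tfrac{1}{\pi(1-\pi)}$, invoke \Cref{lem:clip} with that choice, simplify the integrand, and antidifferentiate. The one cosmetic divergence is the simplification step, where you decompose $\tfrac{2(p-y)}{p(1-p)}$ by partial fractions while the paper rewrites it directly as $\tfrac{2}{1-y-p}$ by case analysis on $y\in\{0,1\}$; the two manipulations are equivalent. However, the sign bookkeeping you dismiss as a ``minor secondary worry'' deserves an actual check rather than a hand-wave. Substituting your antiderivative $-2\log|p-(1-y)|$ over the oriented interval from $p=\clip{1-b}{1-a}(\shalf(x))$ up to $p=\clip{1-b}{1-a}(1-y)$ yields
$2\log\bigl|\clip{1-b}{1-a}(\shalf(x))-(1-y)\bigr|-2\log\bigl|\clip{1-b}{1-a}(1-y)-(1-y)\bigr|$,
which is the sign-reversal of the theorem's stated right-hand side, not ``exactly'' it. This is in fact the sign one should expect: the left-hand side is manifestly non-negative, while the theorem's stated right-hand side is non-positive, since $\clip{1-b}{1-a}(1-y)$ is always the boundary point of the interval nearest $1-y$ and hence no farther from $1-y$ than the clipped score. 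The paper's own final line commits the same slip (the antiderivative of $\tfrac{2}{1-y-p}$ in $p$ is $-2\ln|1-y-p|$, and the minus sign is silently dropped), so the discrepancy you would have caught by carrying out the substitution lives in the source; the cautionary point is simply that you asserted the signs work out precisely at the step where they do not.
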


\begin{proof}
Use \Cref{lem:clip} with $\C = \frac{1}{\pi(1-\pi)}$.
\begin{align*}
&\left[\logit{b}-\logit{a}\right]
\E_{\logit{\pi} \sim [\logit{a}, \logit{b}]} \metric{PAMA}(\Dtgt,\s, \tau)
\\[1em]&=
\left[b-a\right]
\E_{\pi \sim [a,b]}
\E_{(x,y) \sim \Dtgt}
\C \yhateqy[\pi]
\\[1em]&= \E_{(x,y) \sim \Dhalf}
\int^{\clip{1-b}{1-a}(1-y)}_{p=\clip{1-b}{1-a}(\shalf(x))} 2(p-y) \Cp dp
&\text{\Cref{lem:clip}}
\\[1em]&= \E_{(x,y) \sim \Dhalf}
\int^{\clip{1-b}{1-a}(1-y)}_{p=\clip{1-b}{1-a}(\shalf(x))} 2(p-y) \frac{dp}{(1-p)p}
\\[1em]&= \E_{(x,y) \sim \Dhalf}
\int^{\clip{1-b}{1-a}(1-y)}_{p=\clip{1-b}{1-a}(\shalf(x))} \frac{2dp}{(1-y-p)}
&\text{case analysis}
\\[1em]&= 2\E_{(x,y) \sim \Dhalf}
\ln\left|1-y-\clip{1-b}{1-a}(1-y)\right|
-\ln\left|1-y-\clip{1-b}{1-a}\shalf(x)\right|
\end{align*}
\end{proof}

\subsection{Weighted Accuracy}
Refer to \Cref{def:pamwa}

\begin{theorem}[PAMWA Log Score]
\label{thm:wa-log}
\begin{align*}
&
\left[\logit{b} - \logit{a}\right]
\E_{\logit{\pi} \sim [\logit{a}, \logit{b}]} \metric{PAMWA}(\Dtgt, \s, \tau, c)
\\[1em]&= 2\E_{(x,y) \sim \Dhalf}
\ln\left|
1 - y - \clip{c\otimes 1-b}{c\otimes 1-a}(1-y)
\right|
-\ln\left|
1 - y - \clip{c\otimes 1-b}{c\otimes 1-a}(\shalf(x))
\right|
\end{align*}
\end{theorem}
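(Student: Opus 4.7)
The plan is to reduce the bounded log score for PAMWA to the bounded log score for PAMA (\Cref{thm:log}) via the equivalence in \Cref{prop:pamwa-pama} together with a logit-coordinate change of variables.

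First I would invoke \Cref{prop:pamwa-pama} to rewrite $\metric{PAMWA}(\Dtgt,\s,\tau,c) = \metric{PAMA}(\D_{\tilde\pi},\s,\tau)$ with $\tilde\pi = 1-c\otimes\pi$, so that the left-hand side becomes a logit-uniform average of PAMA evaluated at a cost-shifted prevalence. Next I would push the uniform measure on $\logit{\pi}$ forward under the map $\pi \mapsto \tilde\pi$. Using ``one minus distributes over odds multiplication'' and ``logit odds multiplication is additive'', we have $\logit{\tilde\pi} = \logit{(1-c)\otimes(1-\pi)} = \logit{1-c} + \logit{1-\pi}$, so the map is an affine shift of unit slope (up to sign) in log-odds coordinates. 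Hence the pushforward is again logit-uniform on the image interval, and by \Cref{prop:logit-interval} this image interval has the same length as $[\logit{a},\logit{b}]$. In particular, the prefactor $\logit{b}-\logit{a}$ equals $\logit{\tilde b}-\logit{\tilde a}$, where $\tilde a,\tilde b$ denote the (reordered) transformed endpoints.

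With the integral in the form handled by \Cref{thm:log}, I would apply that theorem with bounds $(\tilde a,\tilde b)$. Unpacking $1-\tilde\pi = c\otimes(1-\pi)$ identifies the clip endpoints $1-\tilde b$ and $1-\tilde a$ with $c\otimes(1-b)$ and $c\otimes(1-a)$ respectively, producing the advertised right-hand side.

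The main obstacle is strictly bookkeeping: the map $\pi\mapsto 1-c\otimes\pi$ is order-reversing, which swaps the roles of the upper and lower clip endpoints, and the sign conventions in \Cref{prop:logit-interval} must be tracked carefully. These are resolved by the symmetry $\clip{u}{v}=\clip{v}{u}$ (once sorted) and by reading \Cref{prop:logit-interval} as an unsigned interval-length preservation statement, which is all the change of measure requires. Once these tracking issues are carefully managed, the remainder is a direct substitution into \Cref{thm:log}.
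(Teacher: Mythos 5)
Your proposal matches the paper's proof almost exactly: invoke \Cref{prop:pamwa-pama} to replace PAMWA with PAMA at the shifted prevalence $(1-c)\otimes\pi$, change variables in log-odds space, use \Cref{prop:logit-interval} to preserve the $\logit{b}-\logit{a}$ prefactor, and substitute into \Cref{thm:log}. That is precisely the argument the paper gives, and your identification of the transformed clip endpoints as $c\otimes(1-b)$ and $c\otimes(1-a)$ (via ``one minus distributes over odds multiplication'') is correct.

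One small conceptual slip is worth flagging: you describe the map $\pi\mapsto(1-c)\otimes\pi$ as order-reversing and anticipate needing to swap or re-sort clip endpoints. In fact the map is strictly increasing --- in log-odds coordinates it is just the additive shift $\logit{\pi}\mapsto\logit{\pi}+\logit{1-c}$, which you yourself note --- so the transformed endpoints $\logit{(1-c)\otimes a}$ and $\logit{(1-c)\otimes b}$ inherit their original ordering with no reversal. The quantity that appears inside the clipping operator is $1-\pi$, so the chain $\pi\mapsto 1-\pi\mapsto 1-(1-c)\otimes\pi = c\otimes(1-\pi)$ likewise preserves the ordering $1-b<1-a\Rightarrow c\otimes(1-b)<c\otimes(1-a)$. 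Your hedge about sorting the endpoints would have rescued the argument anyway, but no sorting is actually needed; dropping that worry makes the substitution into \Cref{thm:log} clean and direct.
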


\begin{proof}
The high level idea is we replace a PAMWA term with a PAMA term by \Cref{prop:pamwa-pama}, and then do a change of variables, and apply \Cref{thm:log}.
\begin{align*}
&\left[\logit{b} - \logit{a}\right]
\E_{\logit{\pi} \sim [\logit{a}, \logit{b}]} \metric{PAMWA}(\Dtgt, \s, \tau, c)
\\[1em]&=
\left[\logit{b} - \logit{a}\right]
\E_{\logit{\pi} \sim [\logit{a}, \logit{b}]} \metric{PAMA}(\D_{1-c\otimes\pi},\s, \tau)
\tag{by \Cref{prop:pamwa-pama}}
\\&\qquad\text{now we do a change of variables }\pi' = (1-c)\otimes\pi
\\[1em]&=
\left[\logit{b} - \logit{a}\right]
\E_{\logit{\pi'} \sim [\logit{(1-c)\otimes a}, \logit{(1-c)\otimes b}]} \metric{PAMA}(\D_{\pi'}, \s, \tau)
\\[1em]&=
\left[\logit{1-c\otimes b} - \logit{1-c \otimes a}\right]
\E_{\logit{\pi'} \sim [\logit{(1-c)\otimes a}, \logit{(1-c)\otimes b}]} \metric{PAMA}(\D_{\pi'}, \s, \tau)
\tag{by \Cref{prop:logit-interval}}
\\&\qquad\text{now by \Cref{thm:log}}
\\[1em]&= 2\E_{(x,y) \sim \Dhalf}
\ln\left|
1 - y - \clip{1-(1-c)\otimes b}{1-(1-c)\otimes a}(1-y)
\right|
-\ln\left|
1 - y - \clip{1-(1-c)\otimes b}{1-(1-c)\otimes a}(\shalf(x))
\right|
\\[1em]&= 2\E_{(x,y) \sim \Dhalf}
\ln\left|
1 - y - \clip{c\otimes 1-b}{c\otimes 1-a}(1-y)
\right|
-\ln\left|
1 - y - \clip{c\otimes 1-b}{c\otimes 1-a}(\shalf(x))
\right|
\end{align*}
\end{proof}

\subsection{Net Benefit}
Refer to \Cref{def:pamnb}

\begin{theorem}[PAMNB Log Score]
\begin{align*}
&\left[\logit{b} - \logit{a}\right]
\E_{\logit{\pi} \sim [\logit{a}, \logit{b}]} \metric{PAMNB}(\Dtgt, \s, \tau, c)
\\[1em]&= \frac{2}{1-c}\E_{(x,y) \sim \D_{(1-c)}}
\ln\left|
1 - y - \clip{1-b}{1-a}(1-y)
\right|
-\ln\left|
1 - y - \clip{1-b}{1-a}(\s_{(1-c)}(x))
\right|
\end{align*}
\end{theorem}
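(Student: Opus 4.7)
The plan is to prove this by direct computation in the spirit of Theorem~\ref{thm:wa-log}, carrying out the change of variable directly on the prevalence integral rather than reducing to Theorem~\ref{thm:log} as a black box, because matching the stated clip bounds $[1-b,1-a]$ requires keeping $\pi$ (not $\pi'=(1-c)\otimes\pi$) as the integration variable. First, I rewrite $\metric{PAMNB}$ by importance sampling from $\Dsrc$ to $\D_{(1-c)}$: a short calculation using Lemma~\ref{lem:sample} causes the two asymmetric-cost factors to collapse, yielding
\[
\metric{PAMNB}(\Dtgt,\s,\tau,c) \;=\; \tfrac{1}{1-c}\,\E_{(x,y)\sim\D_{(1-c)}}\!\bigl[\pi^y(1-\pi)^{1-y}\,\yhateqy[\pi][c]\bigr].
\]
The prefactor $1/(1-c)$ is the source of the $1/(1-c)$ normalization in the target expression, and the remaining factor $\pi^y(1-\pi)^{1-y}$ is exactly what cancels the $\pi(1-\pi)$ denominator of the log-odds uniform density, leaving a clean integral in~$\pi$.

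Next, I take $\E_{\logit{\pi}\sim\mathrm{Uniform}(\logit{a},\logit{b})}$, swap expectations by Fubini, and change variables $p=1-\pi$ (flipping the limits to $[1-b,1-a]$). The integrand becomes $1/p$ when $y=1$ and $1/(1-p)$ when $y=0$. The crucial identification is a cost-adjusted analogue of Lemma~\ref{lem:adjust}: because $\pi\otimes\shalf(x)\ge c$ is equivalent, by log-odds additivity, to $\s_{(1-c)}(x)\ge 1-\pi$, the indicator $\yhateqy[\pi][c]$ restricts $p$ to the set obtained from $\chi(x,y)$ by substituting $\s_{(1-c)}(x)$ for $\shalf(x)$. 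Intersecting $[1-b,1-a]$ with this set via the clip operator yields the endpoints $\clip{1-b}{1-a}(\s_{(1-c)}(x))$ and $\clip{1-b}{1-a}(1-y)$ that appear in the target formula.

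Antidifferentiation of $1/p$ and $1/(1-p)$ produces $\pm\ln|1-y-p|$ in the respective $y$-cases, and evaluating at the clipped endpoints collapses both cases into the single expression involving $\ln|1-y-\clip{1-b}{1-a}(\s_{(1-c)}(x))|$ and $\ln|1-y-\clip{1-b}{1-a}(1-y)|$. The main obstacle is the simultaneous bookkeeping of the threshold adjustment, which transfers the cost into the score by replacing $\shalf(x)$ with $\s_{(1-c)}(x)$ in the correctness condition, and the prevalence bounds, which must remain $[1-b,1-a]$ throughout. An alternative route that first changes variable to $\pi'=(1-c)\otimes\pi$ and applies Lemma~\ref{lem:clip} would instead produce shifted clip limits applied to $\shalf$ and would not match the stated form of the theorem; the direct route sketched above is the natural way to push the cost entirely into the score while preserving the bounds.
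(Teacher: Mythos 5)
Your proof is correct and follows essentially the same path as the paper's. Both arguments hinge on the same two observations: the logit-uniform density $\tfrac{1}{\pi(1-\pi)}$ cancels the prevalence-dependent factors coming out of $\metric{PAMNB}$, and the cost-$c$ correctness condition $\pi\otimes\shalf(x)\ge c$ is equivalent to $\s_{(1-c)}(x)\ge 1-\pi$, so the cost can be pushed entirely into the score while the clip bounds remain $[1-b,1-a]$; from there both antidifferentiate $1/p$ or $1/(1-p)$ by case on $y$ to obtain the log terms. The only material difference is organizational. You reweight $\Dsrc\to\D_{(1-c)}$ at the outset — a tidy move, since your intermediate identity $\metric{PAMNB}=\tfrac{1}{1-c}\E_{\D_{(1-c)}}\bigl[\pi^y(1-\pi)^{1-y}\,\yhateqy[\pi][c]\bigr]$ exposes the $\tfrac{1}{1-c}$ prefactor and the cancellation with $\tfrac{1}{\pi(1-\pi)}$ in a single step — whereas the paper instead rewrites $\left(\tfrac{c}{1-c}\right)^{1-y}$ as $\tfrac{c^{1-y}(1-c)^y}{1-c}$ and $\yhateqy[\pi][c]$ as $\yhateqy[\pi\otimes(1-c)]$, invokes Lemma~\ref{lem:clip} (which routes through $\Dhalf$) with $\shalf$ implicitly replaced by $(1-c)\otimes\shalf=\s_{(1-c)}$, and then reweights $\Dhalf\to\D_{(1-c)}$ to absorb the leftover $c^{1-y}(1-c)^y$. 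Your closing remark that the alternative change of variables $\pi'=(1-c)\otimes\pi$ would produce shifted clip bounds applied to $\shalf$ is exactly right — that is what happens in Theorem~\ref{thm:wa-log} — and the paper's PAMNB proof, like yours, avoids it by moving the cost into the score rather than into the prevalence bounds.
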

\begin{proof}
\begin{align*}
&\left(\logit{b} - \logit{a}\right)
\E_{\logit{\pi} \sim [\logit{a}, \logit{b}]} \metric{PAMNB}(\Dtgt, \s, \tau, c)
\\&=
\int_{\logit{\pi} = \logit{a}}^{\logit{b}}
\metric{PAMNB}(\Dtgt, \s, \tau, c)
\;d\logit{\pi}
\\&=
\int_{\pi = a}^{b}
\metric{PAMNB}(\Dtgt, \s, \tau, c)
\;\frac{d\pi}{\pi(1-\pi)}
\\&=
(b-a)
\E_{\pi \sim [a,b]}
\frac{\metric{PAMNB}(\Dtgt, \s, \tau, c)}{\pi(1-\pi)}
\\&=
(b-a)
\E_{\pi \sim [a,b]}
\E_{x,y\sim \Dtgt}
\frac{
  \left(\frac{c}{1-c}\right)^{1-y}
  \yhateqy[\pi][c]
}{\pi(1-\pi)}
\\&=
(b-a)
\E_{\pi \sim [a,b]}
\E_{x,y\sim \Dtgt}
\frac{1}{\pi(1-\pi)}
\frac{
  c^{1-y}
  (1-c)^y
}{(1-c)}
\yhateqy[\pi\otimes(1-c)]
\\&\qquad\text{now we use \Cref{lem:clip}}
\\[1em]&=
\E_{(x,y) \sim \Dhalf}
\int^{\clip{1-b}{1-a}(1-y)}_{p=\clip{1-b}{1-a}(1-c\otimes\shalf(x))} 2(p - y)
\frac{1}{(1-p)p}\frac{c^{1-y}(1-c)^y}{1-c} dp
\\[1em]&= \frac{2}{1-c}\E_{(x,y) \sim \Dhalf} c^{1-y}(1-c)^y
\int^{\clip{1-b}{1-a}(1-y)}_{p=\clip{1-b}{1-a}((1-c)\otimes \shalf(x))} \frac{p-y}{(1-p)p} dp
\\&\qquad\text{and we use importance sampling}
\\[1em]&= \frac{2}{1-c}\E_{(x,y) \sim \D_{1-c}}
\int^{\clip{1-b}{1-a}(1-y)}_{p=\clip{1-b}{1-a}(\s_{1-c}(x))} \frac{p-y}{(1-p)p} dp
\\&\qquad\text{and by case analysis on y}
\\[1em]&= \frac{2}{1-c}\E_{(x,y) \sim \D_{(1-c)}}
\int^{\clip{1-b}{1-a}(1-y)}_{p=\clip{1-b}{1-a}(\s_{(1-c)}(x))} \frac{1}{1-y-p} dp
\\[1em]&=
\frac{2}{1-c}\E_{(x,y) \sim \D_{(1-c)}}
\ln\left|
1 - y - \clip{1-b}{1-a}(1-y)
\right|
-\ln\left|
1 - y - \clip{1-b}{1-a}(\s_{(1-c)}(x))
\right|
\end{align*}
\end{proof}
\section{13 ways of looking at an AUC-ROC}
\label{apdx:13}

For largely contingent historical reasons, medical informaticists have long reported results either in terms of specificity and sensitivity (if they want a fixed threshold), or in terms of the Area Under the Receiver Operating Characteristic Curve (AUC-ROC).  What exactly does this mean?  There are 12 standard interpretations of the AUC-ROC, and none of them fit very well.  We add a thirteenth which is more relevant but still inadequate for our purposes.

\begin{enumerate}
  \item 0.5 when the classifier is random, and 1.0 when the classifier is perfect. \citet{balancedaccuracy23} reports this as the most common interpretation, a bit tongue in cheek.  It is, unfortunately, also the current authors' experience that this is the most commonly given interpretation in practice.

  \item The 2 alternative forced choice accuracy rate \cite{swetsbirdsall56}.

  This only makes sense in the original psychometric setting where an experimenter in fact guarantees that there is one positive and one negative case \cite{balancedaccuracy23}.

  \item A rescaled version of the Mann-Whitney $U$ statistic \cite{mannwhitney75,hanley82}.  This is actually the same as the statement above, but it sounds more impressive.  Note that AUC-ROC is never reported as a p-value based on this statistic, which suggests that the interpretation is not practically very useful.

  \item A rescaled version of the Kendall's $\tau$ correlation coefficient \cite{hernandez13rate}.

  It is technically true that the AUC-ROC is a pairwise permutation distance between the ideal ranking and the actual ranking.  But there are only 2 ranks!  This makes the exercise meaningless.

  \item An average of precision (though not "Average Precision" which refers to something else)

  There is an occasional attempt to rescue the paradigm by arguing that AUC-ROC shows an average of $TP = K\times \text{Precision}@K$ over a range of $K$.  The trouble is twofold:
    \begin{itemize}
      \item This gives a uniform average over all possible values of $K$.  In a quantity constrained setting where we're forced to pick out only $K$ items to give positive labels (imagine only 10 doses of penicillin in the freezer) we generally have very small K.
      \item Few binary classification problems are actually intended for explicit quantity constraints, so this doesn't fit well.
    \end{itemize}

  \item An average of power over a range of sizes (in the Neyman-Pearson sense)

  This one is very popular with practitioners and virtually absent from the literature, aside from \cite{mcclish89}, which was later criticized by \cite{mcclish12,balancedaccuracy23}.  The trouble is that in the Neyman-Pearson paradigm we're meant to pick a power, and then find out what the size of the test is.  Even if we reverse this interpretation as an average size over a range of powers, the power in the NP paradigm isn't an empirical quantity we might observe a distribution over.  We're meant to pick one.

  \item It is the area under a curve if FPR is plotted against TPR. \cite{hanley82}
  
  \item Average accuracy on the positive class across a uniform distribution of accuracy on the negative class, or vice versa \cite{metz86,metz89,mcclish02}.  This is actually the same as the statement above, but more useful-sounding, and slightly less mysterious since it doesn't use the words "False Positive Rate" or "True Positive Rate".

  \item Given two thresholds $a < b$, the average accuracy on the positive class across a uniform distribution of accuracy on the negative class between those two thresholds, plus the average accuracy on the negative class across a uniform distribution of accuracy on the positive class between those two thresholds, weighted by the class balance between those two thresholds \cite{balancedaccuracy23}.  This is a bit more useful, but it's not clear why we're using the exact class balance between the thresholds.

  \item An average of accuracy as we set the threshold at each data point, leaving operating conditions the same \cite{hernandez11coherent}.  This interpretation is the only one from this set of authors that directly addresses the problem that data is empirical and discrete.

  \item An average of cost-weighted error over a range of cost ratios.
      \citet{hand09} shows that if a score is calibrated, then the AUC-ROC is an average of the cost-weighted error over a range of prevalences. The trouble is that:
      \begin{itemize}
        \item Calibration is a really important property of a score!  Without it, we can only make top-K decisions.  Even then we have to assume there are more than $K$ real positives.
        \item The range of costs is a function of model scores on the training data.  But the risk that a given individual has syphilis is not a good estimator of how much less harmful unnecessary penicillin is than untreated tertiary syphilis.
        \item Because the costs are a function of the model, if two people train models on the same data, they will get different costs, and the average accuracies will be incommensurable.
      \end{itemize}

  \item An average of skew-weighted cost, for skew $z=c\otimes(1-\pi)$.
    \cite{hernandez13rate} proposes this interpretation, although again the costs are set arbitrarily and distinctly by each model, and class-conditional distributions are assumed to be continuous and fully known.  Furthermore, it is not clear what it means to integrate over values of $c\otimes(1-\pi)$.  One possible interpretation is that we set $c=(1-\pi)$ and then calculate a line integral along a particular curve in cost / prevalence space, by way of estimating the area integral over the entire space.

  \item An average of accuracy under label shift, where the distribution of positive class prevalences is derived by sampling from the model scores on the training data.  This is syntactically similar, but specifically derived in the case of the sampling problems that arise from label shift (see \Cref{thm:auc-roc}).  While this is the most relevant definition for our problem, it shares the issues of the distribution being unrelated to the class balance of the deployment and incommensurable across models.

\end{enumerate}
\section{AUC-ROC as Average Accuracy under Label Shift (symmetry, no precision)}
See \Cref{apdx:13} for an enumeration of alternative interpretations of the AUC-ROC, an idea inspired by \citet{turakhia2017}.  Here we focus on the interpretation as an average of shifted accuracy, when the class balance distribution is derived by taking one minus the score from samples drawn from the training data, assuming that the classifier is calibrated, so that $\mathbb{P}(y=1 \mid \s(x)=\tau)=\tau$ for all $\tau \in [0,1]$.

As discussed in \Cref{sec:aucroc}, this interpretation shows a clear meaning for AUC-ROC in our context, but also shows its shortcomings.

Our key innovation is to focus on using importance sampling to directly address label shift.  We begin with a definition, which we then reexpress in terms of balanced classes.  Then we use importance sampling to reexpress components of the AUC-ROC in terms of true positives or true negatives.  Finally, thanks to the intrinsic class symmetry of the AUC-ROC, we combine the results.

\begin{definition}[AUC-ROC as Average over Data]
\begin{align*}
\\\metric{AUC-ROC}(\Dsrc, \s)
&\triangleq
\sum_{(x,y) \in \Dsrc} \frac{1}{|\Dsrc|} \frac{1-y}{1-\pi_0}
\sum_{(x',y') \in \Dsrc} \frac{1}{|\Dsrc|} \frac{y'}{\pi_0}
\Big[\ind{\s(x') > \s(x)} + \half\ind{\s(x') = \s(x)}\Big]
\end{align*}
\end{definition}

We now show that this is equivalent to drawing from a class-balanced distribution, a known property of the AUC-ROC, but one which will simplify our analysis.

\begin{lemma}[Importance Weighting]
\begin{align*}
&
\frac{1}{|\Dsrc|^2}
\sum_{(x,y) \in \Dsrc}
\sum_{(x',y') \in \Dsrc}
\frac{1-y}{1-\pi_0}
\frac{y'}{\pi_0}
\Big[\ind{\s(x') > \s(x)}
+ \half\ind{\s(x') = \s(x)}\Big]
\\&=
\E_{(x,y) \in \Dhalf}
\E_{(x',y') \in \Dhalf}
(1-y)y'
\Big[\ind{\s(x') > \s(x)} + \half\ind{\s(x') = \s(x)}\Big]
\end{align*}
\end{lemma}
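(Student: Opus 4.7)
The plan is to show that both sides evaluate to the standard pairwise AUC-ROC estimator $\tfrac{1}{|N||P|}\sum_{x \in N, x' \in P} [\ind{\s(x') > \s(x)} + \half \ind{\s(x') = \s(x)}]$, where $N = \{(x,y) \in \Dsrc : y = 0\}$ and $P = \{(x,y) \in \Dsrc : y = 1\}$. The key observation is that $\tfrac{1-y}{1-\pi_0}$ and $\tfrac{y'}{\pi_0}$ are the importance sampling weights that project samples from $\Dsrc$ onto its negative-class and positive-class conditionals, and by the label shift invariance of $P(X \mid Y)$ these conditionals coincide with their counterparts under $\Dhalf$.

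First, I would evaluate the left-hand side directly. Since $1-y$ is an indicator for $y = 0$ and $y'$ is an indicator for $y' = 1$, the double sum collapses to a sum over $N \times P$. Using $|N| = |\Dsrc|(1-\pi_0)$ and $|P| = |\Dsrc|\pi_0$, the prefactor $\tfrac{1}{|\Dsrc|^2(1-\pi_0)\pi_0}$ combines with these class counts to leave precisely $\tfrac{1}{|N||P|}$ multiplying the double sum over pairs in $N \times P$. This recovers the canonical empirical AUC-ROC.

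Second, I would rewrite the right-hand side using Lemma \ref{lem:sample} with the substitution $\pi = \pi_0 \to \half$. Because $\Dhalf$ is constructed from $\Dsrc$ under the label shift structure $\Dtgt \to Y \to X$, the class-conditional distributions are preserved between the two measures, so $\E_{\Dhalf}[\,\cdot \mid y=0]$ and $\E_{\Dhalf}[\,\cdot \mid y'=1]$ agree with the corresponding conditionals under $\Dsrc$. The factor $(1-y)y'$ in the integrand selects the $(y, y') = (0, 1)$ slice, and the class marginals appearing in $\E_{\Dhalf}$ combine with the importance weights $W(\pi_0 \to \half; y)$ to reproduce the same pairwise average over $N \times P$, completing the identification.

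The main obstacle is bookkeeping the marginal normalizations consistently: on the left the class reweighting happens inside the summand via the explicit importance weights, while on the right it is absorbed into the measure $\Dhalf$ itself, and the factor $(1-y)y'$ in the integrand must recover the same effective pair-level normalization. The equivalence therefore hinges on checking that $W(\pi_0 \to \half; y)$ evaluated at $y=0$ and at $y'=1$, together with the class priors of $\Dhalf$, reconcile the two forms. Once this algebraic accounting is carried out using Lemma \ref{lem:sample}, label shift invariance makes the equality of conditionals immediate, and no additional structural insight is needed.
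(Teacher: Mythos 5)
The paper states this lemma without a proof block, presenting it as a known property of the AUC-ROC, so there is no in-paper argument to compare your proposal against. Your general plan --- reduce both sides to the canonical pairwise estimator over $N \times P$ --- is reasonable in outline, and your left-hand-side reduction is correct: the weights $\tfrac{1-y}{1-\pi_0}\tfrac{y'}{\pi_0}$ exactly cancel the marginals $|N| = (1-\pi_0)|\Dsrc|$ and $|P| = \pi_0|\Dsrc|$, leaving $\tfrac{1}{|N||P|}\sum_{N\times P}[\cdot]$.

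The step you flag as "the main obstacle" and then assert away is, however, exactly where the argument fails. Converting each $\E_{(x,y)\in\Dhalf}$ via \Cref{lem:sample} gives $\tfrac{1}{|\Dsrc|}\sum_{\Dsrc} W(\pi_0\to\half;y)$, and on the only surviving slice $(y,y')=(0,1)$ you have $W(\pi_0\to\half;0) = \tfrac{1/2}{1-\pi_0}$ and $W(\pi_0\to\half;1) = \tfrac{1/2}{\pi_0}$, while $(1-y)y' = 1$. The effective pair weight on the right is therefore $\tfrac{1}{4(1-\pi_0)\pi_0}$, versus $\tfrac{1}{(1-\pi_0)\pi_0}$ on the left; equivalently, conditioning the two expectations on $y=0$ and $y'=1$ pulls out the class marginals $\mathbb{P}_{\Dhalf}(y{=}0)=\mathbb{P}_{\Dhalf}(y'{=}1)=\tfrac12$, and the factor $(1-y)y'$ has unit magnitude and cannot absorb them. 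So the right-hand side evaluates to $\tfrac14$ of the canonical AUC, not the AUC itself, and your claim that the "algebraic accounting\,\ldots reconciles the two forms" does not hold once you actually carry it out. You cannot repair this by invoking label-shift invariance of the class conditionals, which only guarantees $\E_{\Dhalf}[\,\cdot\mid y]=\E_{\Dsrc}[\,\cdot\mid y]$ and says nothing about the offending marginal factors. Under the standard reading of $\E_{\Dhalf}$ as expectation over the balanced reweighting, the identity as printed appears to be off by precisely this factor of $4$, so a correct treatment would have to either restate the lemma or make explicit a nonstandard normalization for $\E_{\Dhalf}$; your proposal does neither, and as written the proof does not go through.
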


In the continuous case, the $\half\ind{\s(x')=\s(x)}$ term disappears.  In the discrete case, the AUC-ROC is neither exactly equal to the average of true positives nor the average of true negatives, but instead to the average of the two.

\begin{lemma}[Average over Label Shifts of True Positive Rate]
\begin{align*}
&
\E_{(x,y) \in \Dhalf}
\E_{(x',y') \in \Dhalf}
(1-y)y'
\Big[\ind{\s(x') > \s(x)} + \ind{\s(x') = \s(x)}\Big]
\\&
\E_{t \in \s[\Dhalf]}
\E_{(x',y') \in \Dhalf}
\left[\E_{(x,y) \in \Dhalf : \s(x) = t}
1-y
\right]
y'
\Big[\ind{\s(x') \geq t}\Big]
\\&=
\E_{t \in \s[\Dhalf]}
\left[\E_{(x,y) \in \Dhalf}\ind{\s(x) = t}\right]
\E_{(x',y') \in \Dhalf}
(1-t)\cdot y'
\Big[\ind{\s(x') \geq t}\Big]
\\&=
\E_{t \in \s[\Dhalf]}
\E_{(x',y') \in \D_{1-t}}
y'\cdot \ind{\s(x') \geq t}
\end{align*}
\end{lemma}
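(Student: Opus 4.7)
The plan is to prove the chain of four expressions in the lemma by three moves applied in order: the tower property of conditional expectation, the calibration hypothesis on $\s$, and the importance-sampling identity of \Cref{lem:sample}.

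First I would condition the outer expectation $\E_{(x,y) \in \Dhalf}$ on the random score value $t = \s(x)$. After this conditioning, the pair of indicators $\ind{\s(x') > \s(x)}$ and $\ind{\s(x') = \s(x)}$ collapse into the single indicator $\ind{\s(x') \geq t}$, which, together with $y'$, depends on $(x,y)$ only through $t$ and can therefore be pulled out of the inner conditional expectation. The only piece of the integrand that still depends on the conditioning variable is the factor $(1-y)$, which produces the second displayed expression. Then I would invoke calibration of $\s$ on $\Dhalf$: $\E[y \mid \s(x) = t, \Dhalf] = t$, so $\E[1-y \mid \s(x) = t, \Dhalf] = 1-t$. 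Substituting this constant for the inner conditional expectation yields the third line; the bracketed factor $\E_{(x,y) \in \Dhalf} \ind{\s(x) = t}$ is simply the mass the induced score distribution $\s[\Dhalf]$ places at $t$, and is either cancelled by or absorbed into the outer expectation over $t$.

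Finally, to pass from $\E_{(x',y') \in \Dhalf} (1-t)\, y' \ind{\s(x') \geq t}$ to $\E_{(x',y') \in \D_{1-t}} y' \ind{\s(x') \geq t}$, I would apply \Cref{lem:sample}. The weight from $\Dhalf$ to $\D_{1-t}$ is proportional to $1-t$ on the positive class and to $t$ on the negative class; because the factor $y'$ annihilates all negative contributions, only the positive branch survives, and its importance weight precisely absorbs the scalar $(1-t)$ sitting in front of $y'$, delivering the fourth line.

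The main obstacle will be bookkeeping around ties and normalization constants. The full AUC-ROC definition assigns half-weight to the tie term $\half \ind{\s(x') = \s(x)}$, whereas this lemma aggregates the strict and equality indicators with equal unit weight; that is exactly why it recovers only the true-positive-rate half of the AUC-ROC, and a symmetric derivation will be required for the true-negative half so that the two combine to reconstitute the original definition. The factor of $2$ arising in \Cref{lem:sample} when converting the balanced prior $\half$ into class-conditional sampling for $\D_{1-t}$ must also be tracked carefully against the mass $\E_{(x,y) \in \Dhalf} \ind{\s(x) = t}$ so that the scalar $(1-t)$ is absorbed cleanly without leaving extraneous factors.
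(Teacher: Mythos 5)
Your proof follows the same three-step structure that the paper's chain of equalities encodes: conditioning the outer expectation on $t=\s(x)$ via the tower property (which collapses $\ind{\s(x')>\s(x)}+\ind{\s(x')=\s(x)}$ into $\ind{\s(x')\geq t}$), invoking calibration to replace the inner conditional expectation $\E[1-y \mid \s(x)=t,\Dhalf]$ by the constant $1-t$, and reabsorbing that scalar through importance sampling from $\Dhalf$ to $\D_{1-t}$, with the factor $y'$ killing the negative-class branch of the weight. The bookkeeping concerns you flag are real and worth pinning down carefully — the importance weight from $\Dhalf$ to $\D_{1-t}$ on the positive class is $2(1-t)$ rather than $1-t$, and the paper's third displayed line carries an extra mass factor $\E_{(x,y)\in\Dhalf}\ind{\s(x)=t}$ whose role is not cleanly reconciled with the outer $\E_{t\in\s[\Dhalf]}$ — but these affect only constant normalization, not the structure of the argument, and your outline is otherwise the paper's.
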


By reversing the order of summation, we can also express a related quantity as an average of true negatives, instead.

\begin{lemma}[Average over Label Shifts of True Negative Rate]
\begin{align*}
&
\E_{(x,y) \in \Dhalf}
\E_{(x',y') \in \Dhalf}
(1-y)y'
\Big[\ind{\s(x') > \s(x)}\Big]
\\&=
\E_{(x',y') \in \Dhalf} 
\E_{(x,y) \in \Dhalf}
y' (1-y)
\Big[\ind{\s(x') > \s(x)}\Big]
\\&=
\E_{t \in \s[\Dhalf]}
\E_{(x,y) \in \Dhalf}
\left[\E_{(x',y') \in \Dhalf : \s(x') = t}
y'
\right]
(1-y)
\Big[\ind{t > \s(x)}\Big]
\\&=
\E_{t \in \s[\Dhalf]}
\left[\E_{(x',y') \in \Dhalf}\ind{\s(x') = t}\right]
\E_{(x',y') \in \Dhalf}
t (1-y)
\Big[\ind{t > \s(x)}\Big]
\\&=
\E_{t \in \s[\Dhalf]}
\E_{(x,y) \in \D_{1-t}}
(1-y)\cdot \ind{\s(x) < t}
\end{align*}
\end{lemma}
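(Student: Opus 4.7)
My plan mirrors the argument given for the preceding \emph{Average over Label Shifts of True Positive Rate} lemma but with the roles of the two i.i.d.\ draws swapped, so that the stratifying variable is the score of the \emph{second} sample rather than the first. The overall strategy is to (i) reorder the two expectations so that $(x',y')$ sits on the outside; (ii) stratify the outer expectation by the level $t = \s(x')$; (iii) use the calibration hypothesis to collapse the resulting conditional expectation of $y'$ into $t$; and (iv) absorb the surviving factor of $t$ into an importance weight that rewrites the inner expectation over $\Dhalf$ as one over $\D_{1-t}$.

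Spelling this out, I would first invoke Fubini to exchange the two outer integrations. Applying the tower property to the $(x',y')$-expectation then rewrites it as $\E_{t \sim \s[\Dhalf]} \E[\,\cdot\,\mid \s(x') = t]$, which cleanly separates the indicator, now $\ind{\s(x) < t}$, from the only remaining $(x',y')$-dependence, which is the factor $y'$. The calibration hypothesis $\E_{(x',y') \in \Dhalf}[y' \mid \s(x') = t] = t$ turns the integrand into $t \cdot (1-y) \ind{\s(x) < t}$, with the outer expectation now only over $t \sim \s[\Dhalf]$ and the inner over $(x,y) \in \Dhalf$. The final move is \Cref{lem:sample}: the class-conditional importance weight from $\Dhalf$ to $\D_{1-t}$ equals $2|t-y|$, which on the negative class ($y=0$) is exactly $2t$; this factor of $t$ matches, up to the normalization that defines the balanced measure, the coefficient produced by calibration, yielding the claimed $\E_{(x,y) \in \D_{1-t}} (1-y) \ind{\s(x) < t}$.

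The main obstacle is constant-tracking across the calibration and reweighting steps: a stray factor of two is easy to introduce when passing between $\Dhalf$ and $\D_{1-t}$, and one has to be careful that the calibration-induced $t$ is absorbed by the negative-class branch of the importance weight rather than double-counted. Structurally the lemma is dual to the True Positive Rate case under the exchange $y \leftrightarrow 1-y$ and the reversal of the score inequality, so once either is established the other would follow by symmetry; I would nevertheless verify it directly from \Cref{lem:sample} to keep every factor of two explicit.
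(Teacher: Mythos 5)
Your proposal is correct and follows essentially the same approach as the paper: the paper's ``proof'' is the displayed chain of equalities itself, and your four moves (Fubini to swap the two draws, stratify on $t = \s(x')$, apply the calibration hypothesis to collapse $\E_{(x',y') \in \Dhalf}[y' \mid \s(x')=t]$ to $t$, and fold the surviving factor of $t$ into the $\Dhalf \to \D_{1-t}$ importance weight of \Cref{lem:sample}) are exactly the justifications for each step, mirroring the True Positive Rate lemma under the $y \leftrightarrow 1-y$ symmetry you identify. Your caution about factor-of-two bookkeeping is well placed: applying \Cref{lem:sample} literally gives the weight $2\lvert t - y\rvert$, which on the $y=0$ branch is $2t$ rather than the bare $t$ appearing in the displayed chain, so the constants do warrant the explicit verification you propose.
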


Finally, we combine these three results to show that the AUC-ROC is an average of accuracy under label shift.

\begin{theorem}[AUC-ROC As Average Accuracy]
\label{thm:auc-roc}
\begin{align*}
2\metric{AUC-ROC}(\s)
&=\E_{t \sim \s[\Dhalf]} \metric{PAMA}(\D_{1-t}, \s, \half)
\end{align*}
\begin{proof}
\begin{align*}
&2\metric{AUC-ROC}(\Dsrc, \s)
\\&=
\E_{(x,y) \in \Dhalf}
\E_{(x',y') \in \Dhalf}
(1-y)y'
\Big[2\cdot\ind{\s(x') > \s(x)} + \ind{\s(x') = \s(x)}\Big]
\\&=
\E_{(x,y) \in \Dhalf}
\E_{(x',y') \in \Dhalf}
(1-y)y'
\Big[\ind{\s(x') > \s(x)} + \ind{\s(x') = \s(x)}\Big]
\\&\qquad\qquad+
\E_{(x,y) \in \Dhalf}
\E_{(x',y') \in \Dhalf}
(1-y)y'
\Big[\ind{\s(x') > \s(x)}\Big]
\\&=
\E_{t \sim \s[\Dhalf]}\left[
\E_{(x',y') \in \D_{1-t}}
y'\cdot \ind{\s(x') \ge t}
+
\E_{(x,y) \in \D_{1-t}}
(1-y)\cdot \ind{\s(x) < t}
\right]
\\&=
\E_{t \sim \s[\Dhalf]}\left[
\metric{Accuracy}(\D_{1-t}, \s(x), \tau=t)
\right]
\\&=
\E_{t \sim \s[\Dhalf]}\left[
\metric{Accuracy}(\D_{1-t}, 1-t\otimes\s(x), \tau=\half)
\right]
\\&=
\E_{t \sim \s[\Dhalf]}\left[
\metric{PAMA}(\D_{1-t}, \s(x), \tau=\half)
\right]
\end{align*}
\end{proof}
\end{theorem}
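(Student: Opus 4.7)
The plan is to combine three ingredients already established in the appendix: the importance-sampling rewrite of AUC-ROC as a double expectation over $\Dhalf$, the true-positive-rate lemma, and the true-negative-rate lemma. The key algebraic step is rewriting the factor of $2$ in $2\,\metric{AUC-ROC}$ so that the tie-breaking term lines up with each lemma.

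First I would start from $2\,\metric{AUC-ROC}(\s)$ and apply the importance-sampling lemma to reexpress it as
\[
\E_{(x,y)\sim\Dhalf}\E_{(x',y')\sim\Dhalf}\,(1-y)\,y'\,\bigl[2\,\ind{\s(x')>\s(x)}+\ind{\s(x')=\s(x)}\bigr].
\]
Then I would split the bracket as $\bigl[\ind{\s(x')>\s(x)}+\ind{\s(x')=\s(x)}\bigr]+\ind{\s(x')>\s(x)}$. This is the crucial move: the first summand is exactly the left-hand side of the TPR lemma, and the second is exactly the left-hand side of the TNR lemma.

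Applying the TPR lemma to the first summand yields $\E_{t\sim\s[\Dhalf]}\E_{(x',y')\sim\D_{1-t}}\,y'\,\ind{\s(x')\geq t}$, and applying the TNR lemma to the second summand yields $\E_{t\sim\s[\Dhalf]}\E_{(x,y)\sim\D_{1-t}}\,(1-y)\,\ind{\s(x)<t}$. Renaming variables in the first term and combining both under the outer expectation over $t$ gives
\[
\E_{t\sim\s[\Dhalf]}\E_{(x,y)\sim\D_{1-t}}\bigl[y\,\ind{\s(x)\geq t}+(1-y)\,\ind{\s(x)<t}\bigr],
\]
which is precisely $\E_{t\sim\s[\Dhalf]}\bigl[\metric{Accuracy}(\D_{1-t},\s,t)\bigr]$.

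Finally, to convert the raw-score threshold $t$ to the $\half$ threshold appearing in PAMA, I would invoke the prior-adjustment identity from Appendix~\ref{apdx:prior_adjustment}: since odds multiplication is monotone in each argument, $(1-t)\otimes\s(x)\geq\half \iff \s(x)\geq t$, so thresholding $\s$ at $t$ under prevalence $1-t$ gives the same labels as thresholding the prior-adjusted score at $\half$, which is exactly the classifier used in the definition of $\metric{PAMA}(\D_{1-t},\s,\half)$. Substituting this equivalence completes the chain of equalities.

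The main obstacle is handling the tie indicator $\ind{\s(x')=\s(x)}$ cleanly. The two lemmas treat ties differently (the TPR version absorbs them via $\geq$, while the TNR version excludes them via $>$ strict inequality), so the whole argument hinges on the identity $2\ind{>}+\ind{=}=[\ind{>}+\ind{=}]+\ind{>}$ that sends exactly one copy of the tie term into each lemma. Once that split is pinned down, the rest is symbolic bookkeeping, recognizing the resulting sum as the pointwise decomposition of accuracy on $\D_{1-t}$ into correctly classified positives and correctly classified negatives, and invoking the prior-adjustment equivalence to re-express accuracy at threshold $t$ as PAMA at threshold $\half$.
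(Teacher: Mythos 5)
Your proposal is correct and follows essentially the same route as the paper's proof: the same importance-sampling rewrite, the same $2\ind{>}+\ind{=}=[\ind{>}+\ind{=}]+\ind{>}$ split to feed one copy of the tie term to the TPR lemma and the other term to the TNR lemma, the same recombination into accuracy at threshold $t$, and the same monotonicity-of-$\otimes$ argument to convert to PAMA at threshold $\half$. You've articulated the tie-handling bookkeeping and the prior-adjustment equivalence more explicitly than the paper does, but the argument is the same.
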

\section{EICU}
\label{apdx:eicu}
We show how to use our approach to evaluate subgroup performance on the EICU dataset.  We consider the in-hospital mortality predictions based on APACHE IV scores in the freely available subset of the eICU dataset \cite{eicu00,eicu18,eicu19,eicu21}.
A quick check shows that the AUC-ROC and Accuracy are both higher for the African American patients than for the Caucasian patients.
However, the clipped cross entropy is lower.
We use further decompositions to understand why this is happening, and why the metrics fail to agree.

\subsection{Mechanism Shift vs. Label Shift}
To separate which parts of the difference in log score are due to different class balance vs different mechanism of prediction, we can rely on the label shift assumption (within subgroups) and use the do calculus of \citet{pearl00}
\begin{figure}[ht]
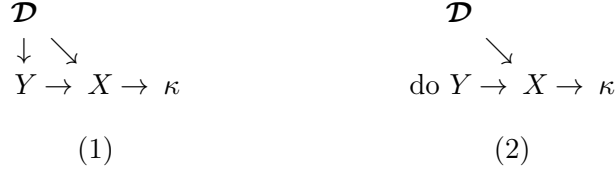

\centering
\begin{tabular}{ccc}
\begin{minipage}{0.31\textwidth}
\centering
$$
\begin{array}{r@{\,}r@{\,}r}
  \D & &\\
  \downarrow\,&\searrow\quad\,&\\
   Y & \rightarrow X & \rightarrow \kappa
\end{array}
$$
(1)
\end{minipage} &
\begin{minipage}{0.31\textwidth}
\centering
$$
\begin{array}{r@{\,}r@{\,}r}
  \D & &\\
  &\searrow\quad\,&\\
  \text{do } Y & \rightarrow X & \rightarrow \kappa
\end{array}
$$
(2)
\end{minipage}
\end{tabular}
\caption{
  Causal diagrams: (1) shows differences in performance are based both on label shift and differences in mechanism between subgroups (2) if we can intervene on Y to hold it constant, then we can measure differences in performance based purely on differing performance of the model between subgroups.  As covered in the main body of the text, under label shift, we can simulate changes in Y using importance weighting and prior-adjusted maximal accuracy.
}
\label{fig:causal-subgroup}
\end{figure}

For brevity, all our expectations over $\pi$ in this section will assume the distribution \[
\logit{\pi} \sim \text{Uniform}(\logit{\pi_\text{blue}}, \logit{\pi_\text{orange}})
\]
The causal impact after intervention on Y is given by the clipped cross entropy:
\[
\Delta_{\D\to X} =
\E_\pi \text{PAMNB}(\D_{\text{orange},\pi}, \s, c) - \text{PAMNB}(\D_{\text{blue},\pi}, \s, c)
\]
We can now additively decompose the difference in Accuracy:
\[
\Delta_{\D\to \kappa} = \Delta_{\D\to X} + \Delta_{\D\to Y}
\]
Which allows us to calculate the label shift effect:
\begin{align*}
\Delta_{\D\to Y} =
\E_\pi \bigg[
&\Big[
  \text{PAMNB}(\D_{\text{orange},\pi}, \s, c) -
\text{PAMNB}(\D_{\text{orange}}, \s, c)
\Big]
\\+
&\Big[
\text{PAMNB}(\D_{\text{blue}}, \s, c)
- \text{PAMNB}(\D_{\text{blue},\pi}, \s, c)
\Big]
\bigg]
\end{align*}

\begin{figure}[ht]
\centering
\includegraphics[width=0.4\textwidth]{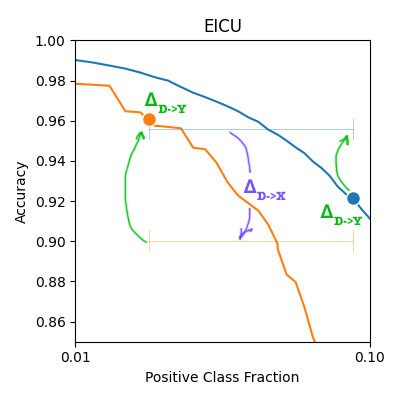}
\includegraphics[width=0.4\textwidth]{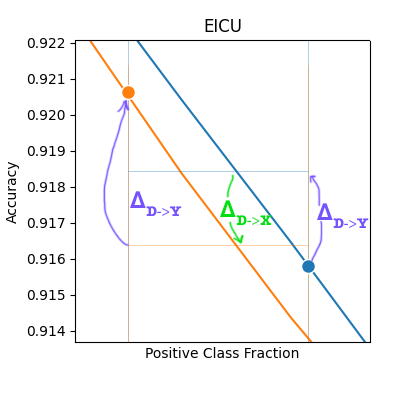}
\caption{
  The label shift effect is quite dramatic on the public subsample of EICU.
  In fact a great deal of this results from small sample size.
  But on the full sample, at a much finer scale, the same effect is present.
  $\Delta_{\D\to \kappa} > 0$, but the difference is all from $\Delta_{\D\to Y}$, and in fact $\Delta_{\D\to X} < 0$.
}
\end{figure}

\subsection{Calibration vs. Sharpness}

We can also decompose the difference in accuracy into calibration loss and sharpness.

\[
  \Delta_{TOTAL} = \Delta_{S} + \Delta_{C}
\]

\begin{figure}[ht]
\centering
\includegraphics[width=0.4\textwidth]{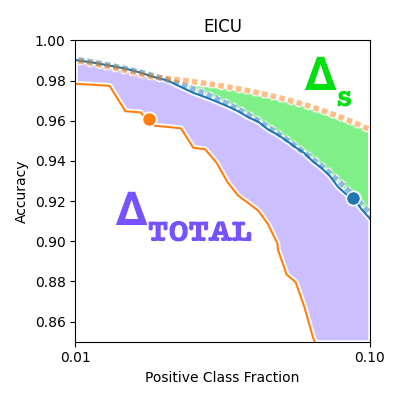}
\includegraphics[width=0.4\textwidth]{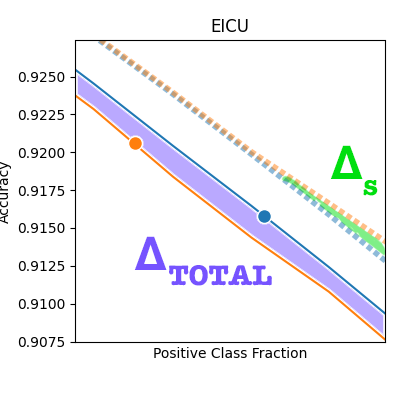}
\caption{
  The difference in sharpness is fairly dramatic on the public subsample of EICU,
  but swamped by the difference in calibration (which is far worse for African Americans).
  Something far less dramatic happens on the full sample, where calibration is quite good, and most of the loss comes from sharpness.  The sharpness of the model is still better for African Americans, but the miscalibration is worse, and that effect dominates in the overall.
}
\end{figure}

It's easy to calculate the overall gap.
\[
\Delta_{TOTAL} = \E_\pi \text{PAMNB}(\D_{\text{orange},\pi}, \s, c) - \text{PAMNB}(\D_{\text{blue},\pi}, \s, c)
\]
The sharpness gap is also straightforward, since we can simply recalibrate the model on the subgroup of the evaluation set (which we'll call $s^*$), and measure the loss of the recalibrated model.
\[
\Delta_{S} = \E_\pi \text{PAMNB}(\D_{\text{orange},\pi}, \s^*_\text{orange}, c) - \text{PAMNB}(\D_{\text{blue},\pi}, \s^*_\text{blue}, c)
\]
This remaining gap is the calibration gap.
\begin{align*}
\Delta_{C} =
\E_\pi \bigg[
&\Big[
  \text{PAMNB}(\D_{\text{orange},\pi}, \s, c) -
\text{PAMNB}(\D_{\text{orange},\pi}, \s^*_\text{orange}, c)
\Big]
\\+
&\Big[
\text{PAMNB}(\D_{\text{blue},\pi}, \s^*_\text{blue}, c)
- \text{PAMNB}(\D_{\text{blue},\pi}, \s, c)
\Big]
\bigg]
\end{align*}

Of course the difference in sharpness measured by clipped cross entropy is not exactly the same as the difference in sharpness measured by the AUC-ROC, because the weighting across thresholds is different.  Nevertheless, the two are correlated.

\begin{figure}[ht]
\centering
\includegraphics[width=0.4\textwidth]{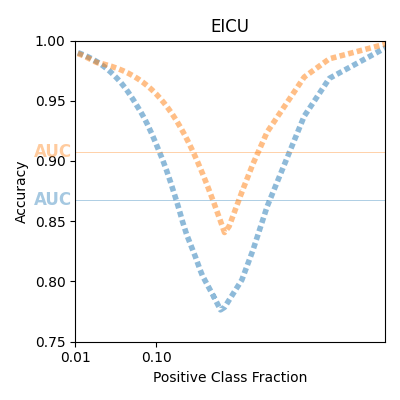}
\includegraphics[width=0.4\textwidth]{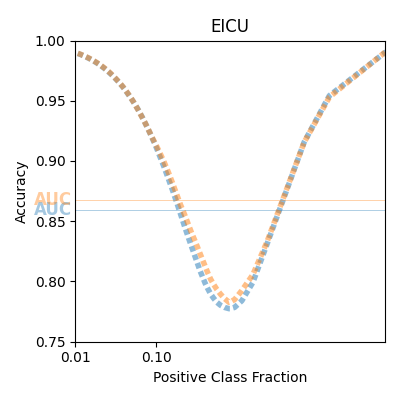}
\caption{
  The difference in sharpness for clipped cross entropy is not exactly the same as the difference in sharpness measured by the AUC-ROC curve because, as mentioned earlier, the AUC-ROC weights different thresholds differently.  However, there is some correlation between the two.
  Sharpness across the full range of prevalences is a fair bit higher for African Americans in the public subsample, and somewhat higher in the full sample as well.
}
\end{figure}

\subsection{Confidence Intervals}

As is often the case, it's easy to take this too seriously.  Since the prevalence of mortality is low, and the fraction of black patients is already low as well, it turns out there's a tremendous amount of sampling error in the public subset.  When we draw the confidence intervals, we see that we can't really conclude anything.

\begin{figure}[ht]
    \centering
    \includegraphics[width=0.45\textwidth]{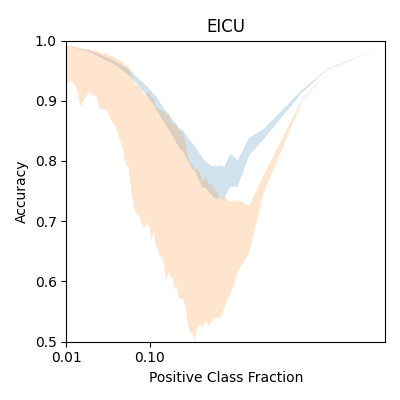}
    \includegraphics[width=0.45\textwidth]{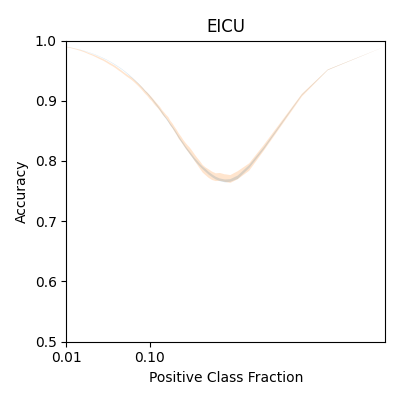}
    \caption{
      There's a significant gap between the performance on the public dataset for black and white patients, but it's well within the 95\% confidence interval.
      By comparison, when we use the full dataset, we see that the quality of the models is actually quite similar, and any difference in clinical utility is largely due to the difference in prevalence.
    }
    \label{fig:zoom}
\end{figure}

\subsection{Conclusion}

While AUC-ROC and Accuracy each handle some aspects of performance, we show an example with well known public data where both give the same misleading intuition about subgroup performance differences.  What makes the clipped cross entropy approach useful in this setting is that its linearity makes it easy to decompose different effects, and its integral across scenarios allows us to catch effects that can't be seen at any one class balance.

It is also important that in each of these decomposition plots the two effects we were trying to decompose were measured in the same units, so we don't have to ask ourselves which one is more important.  This generalizes to Net Benefit and Weighted Accuracy decompositions as well.
\end{document}